\newcommand{\defeq}{\overset{\text{\tiny def}}{=}}
\def\vdelta{{\boldsymbol{\delta}}}
\def\veta{{\boldsymbol{\eta}}}
\def\eqref#1{equation~\ref{#1}}
\def\1{\bm{1}}
\def\vtheta{{\bm{\theta}}}
\def\vp{{\bm{p}}}
\def\vt{{\bm{t}}}
\def\vv{{\bm{v}}}
\def\vx{{\bm{x}}}
\def\vy{{\bm{y}}}
\DeclareMathAlphabet{\mathsfit}{\encodingdefault}{\sfdefault}{m}{sl}
\SetMathAlphabet{\mathsfit}{bold}{\encodingdefault}{\sfdefault}{bx}{n}
\def\gL{{\mathcal{L}}}
\def\gS{{\mathcal{S}}}
\DeclareMathOperator*{\argmax}{arg\,max}
\theoremstyle{plain}
\newtheorem{theorem}{Theorem}[section]
\newtheorem{proposition}[theorem]{Proposition}
\newtheorem{corollary}[theorem]{Corollary}
\theoremstyle{definition}
\newtheorem{assumption}[theorem]{Assumption}
\theoremstyle{remark}
\title{Understanding and Improving Fast Adversarial Training against $l_0$ Bounded Perturbations}
\author{
  Xuyang Zhong \\
  Department of Computer Science\\
  City University of Hong Kong\\
  \texttt{xuyang.zhong@my.cityu.edu.hk}
  \And Yixiao Huang \\
  Department of Computer Science\\
  City University of Hong Kong\\
  \texttt{yixiao.huang@my.cityu.edu.hk}
  \And Chen Liu \thanks{Corresponding Author}\\
  Department of Computer Science\\
  City University of Hong Kong\\
  \texttt{chen.liu@cityu.edu.hk}
}
\begin{document}
\maketitle
\begin{abstract}
This work studies fast adversarial training against sparse adversarial perturbations bounded by $l_0$ norm.
We first demonstrate the unique challenges of employing $1$-step attacks on $l_0$ bounded perturbations, especially catastrophic overfitting (CO) that cannnot be properly addressed by existing fast adversarial training method for other $l_p$ norms ($p \geq 1$).
We highlight that CO in $l_0$ adversarial training arises from sub-optimal perturbation locations of $1$-step attack.
Some strategies like multi-$\epsilon$ can mitigate this sub-optimality to some extent, they lead to unstable training in turn.
Theoretical and numerical analyses also reveal that the loss landscape of $l_0$ adversarial training is more craggy than its $l_\infty$, $l_2$ and $l_1$ counterparts, which exaggerates CO.
To address this issue, we adopt soft labels and the trade-off loss function to smooth the adversarial loss landscape.
Extensive experiments demonstrate our method can overcome the challenge of CO, achieve state-of-the-art performance, and narrow the performance gap between $1$-step and multi-step adversarial training against sparse attacks. Codes are available at \href{https://github.com/CityU-MLO/sPGD}{https://github.com/CityU-MLO/sPGD}.
\end{abstract}

\section{Introduction} \label{sec:intro}
Deep neural networks have been shown vulnerable to adversarial perturbations~\citep{2013Intriguing}.
To achieve robust models, comprehensive evaluations \citep{Athalye2018ObfuscatedGG, croce2020reliable, croce2020robustbench} have demonstrated that adversarial training~\citep{madry2017towards} and its variants \citep{Croce2019MinimallyDA, sehwag2021, rebuffi2021data, 2021Improving, rade2021helperbased, Cui2023DecoupledKD, Wang2023BetterDM} are the most effective methods.
However, adversarial training is generally computationally expensive because generating adversarial perturbations in each training step needs multiple forward and backward passes of the model.
Such efficiency issues hinder the scalability of adversarial training.

Improving the efficiency of adversarial training is tricky.
Some works \citep{Shafahi2019AdversarialTF, Zhang2019YouOP, Wong2020FastIB, Sriramanan2021TowardsEA} employ faster but weaker 1-step attacks to generate adversarial perturbations for training. 
However, such methods may suffer from \textit{catastrophic overfitting (CO)} \citep{Kang2021UnderstandingCO}: the model overfits these weak attacks instead of achieving true robustness against adaptive and stronger attacks. 
On the other hand, most existing works~\citep{madry2017towards, tramer2019adversarial, jiang2023towards} focus on studying adversarial perturbations bounded by $l_\infty$, $l_2$ or $l_1$ norms.
In these scenarios, the set of allowable perturbations is \textit{convex}, which facilitates optimizing adversarial perturbations and thus adversarial training.
However, there are many scenarios in real-world applications where sparse perturbations, bounded by the $l_0$ norm, need to be considered~\citep{modas2019sparsefool, croce2019sparse, croce2022sparse, zhong2024efficient}.
Since the $l_0$ norm is not a proper norm, the set of all allowable perturbations in this case is \textit{not convex}.
Consequently, from an optimization perspective, obtaining robust models against sparse perturbations becomes more challenging.
Compared with the $l_\infty$, $l_2$ and $l_1$ counterparts, more steps are needed to generate strong $l_0$ bounded perturbations, making the corresponding adversarial training even more computationally expensive.

Among algorithms aiming at obtaining robust models against sparse perturbations, sAT and sTRADES~\citep{zhong2024efficient} stand out as the most effective ones.
These methods employ adversarial training against Sparse-PGD (sPGD)~\citep{zhong2024efficient}.
However, they still require $20$ steps to generate adversarial perturbations in each training step to achieve decent performance.
As demonstrated in Table~\ref{tab:intro}, naively decreasing the number of steps to $1$ leads to a significant performance decline for both sAT and sTRADES.
\begin{table}[!t]
    \centering
    \caption{Robust accuracy of sAT and sTRADES \citep{zhong2024efficient} with different steps ($t$) evaluated by Sparse-AutoAttack (sAA) \citep{zhong2024efficient}, where the sparsity level is $\epsilon=20$. The models are PreactResNet-18 \citep{he2016deep} trained on CIFAR-10 \citep{krizhevsky2009learning}.}
    \small
    \tabcolsep=0.5em
    \begin{tabular}{l|c c c c}
    \toprule[1.5pt]
     & \makecell{sAT ($t=1$)} & \makecell{sAT ($t=20$)} & \makecell{sTRADES ($t=1$)} & \makecell{sTRADES ($t=20$)}\\
    \midrule[1pt]
    Robust Acc. & 0.0 & 36.2 & 31.0 & 61.7\\
    \bottomrule[1.5pt]
    \end{tabular}
    \vspace{-1em}
    \label{tab:intro}
\end{table}

In this work, we investigate the challenges associated with fast adversarial training against sparse perturbations, including training instability caused by CO and performance decline in both robust and clean accuracy.
Specifically, we highlight that CO in $l_0$ adversarial training is caused by sub-optimal perturbation locations of $1$-step attack. Our observation indicates that adjusting the perturbation magnitudes alone cannot help mitigate CO in this context, so \textbf{many existing CO mitigation methods \citep{Kim2020UnderstandingCO, Andriushchenko2020UnderstandingAI, Zheng2019EfficientAT, huang2023fast} used in other cases do not work at all in the $l_0$ scenario}.
Although the multi-$\epsilon$ strategy \cite{jiang2023towards,zhong2024efficient} can mitigate sub-optimal perturbation locations, it suffers from unstable training and degraded clean accuracy.
In light of these findings, we present empirical and theoretical evidence to illustrate that the loss landscape of adversarial training against $l_0$ bounded perturbations is markedly more craggy compared to its $l_\infty$, $l_2$, and $l_1$ counterparts.
Furthermore, we corroborate that \textbf{the craggy loss landscape aggravates CO in $l_0$ adversarial training}.

Drawing from these insights, we adopt soft labels and a trade-off loss function to enhance the smoothness of the adversarial loss objective function, thereby improving the performance of fast adversarial training against sparse perturbations.
Although similar smoothing techniques have been applied in adversarial training~\cite{Zhang2019TheoreticallyPT,li2023understanding,Wang2020Improving, chen2020robust}, existing literature employs them to address robust overfitting to boost performance in $l_2$ and $l_\infty$ cases.
By contrast, we demonstrate that it is essential and has a much larger performance improvement in the $l_0$ case of fast adversarial training.
Finally, our extensive experiments demonstrate that smoothing the loss landscape can eliminate CO in the $l_0$ case, and significantly narrow the performance gap between $1$-step adversarial training and its multi-step counterparts.

To the best of our knowledge, this work is the first to investigate fast adversarial training in the context of $l_0$ bounded perturbations. We summarize the contributions of this paper as follows:

\begin{enumerate}[leftmargin=1.2em]
    \item We highlight that catastrophic overfitting (CO) in fast $l_0$ adversarial training arises from sub-optimal perturbation locations in $1$-step attacks. Techniques effective in $l_\infty$, $l_2$, and $l_1$ cases cannot fully address the CO issue in the $l_0$ case. 
    \item Our theoretical and empirical analysis shows that the adversarial loss landscape is more craggy in $l_0$ cases, exacerbating CO. In this regard, we can adopt soft labels and the trade-off loss function to provably smooth the adversarial loss landscape.
    \item Comprehensive experiments demonstrate that smoothing the loss function significantly narrows the performance gap between $1$-step $l_0$ adversarial training and its multi-step counterparts, achieving state-of-the-art performance in efficient adversarial training against sparse perturbations.
\end{enumerate}

\textbf{Notation and Terminology} Consider a classification model $F(\vx,\vtheta)=\{f_i(\vx,\vtheta)\}_{i=0}^{K-1}$, where $\vx\in\mathbb{R}^d$ is the input, $\vtheta$ represents the parameters of the model, and $K$ is the number of classes, $f_i(\vx,\vtheta)$ is the logit of the $i$-th class. Correspondingly, we use $\{h_i\}_{i=0}^{K-1}$ to represent the output probability of each class, which is the result of softmax function applied to $\{f_i\}_{i=0}^{K-1}$. Therefore, the loss objective function $\mathcal{L}$ based on the cross-entropy is calculated as follows:
\begin{equation} \label{eq:loss}
        \mathcal{L}(\vx, \vtheta) \defeq -\sum_{i=0}^{K-1}y_i\log h_i(\vx,\vtheta)
        \defeq -\sum_{i=0}^{K-1} y_i\log\frac{\exp(f_i(\vx,\vtheta))}{\sum_{j=0}^{K-1}\exp(f_j(\vx,\vtheta))}
\end{equation}
where $\vy=[y_1,y_2,...,y_C]$ is the label of $\vx$ in a simplex, i.e., $\sum_i y_i=1$.
In the context of adversarial perturbation, we use $\gS_\epsilon^{(p)}(\vx) \defeq \{\vdelta | \|\vdelta\|_p \leq \epsilon, 0 \leq \vx + \vdelta \leq 1 \}$ to represent the adversarial budget, i.e., the set of all allowable input perturbations for the input $\vx$.
The adversarial loss function is $\gL_\epsilon^{(p)}(\vx, \vtheta) \defeq \max_{\vdelta \in \gS_\epsilon^{(p)}(\vx)} \gL(\vx + \vdelta, \vtheta)$. 
Despite no guarantee to obtain the optimal perturbation in practice, to simplify the notation, we denote the term $\gL_\epsilon^{(p)}$ also as the adversarial loss induced by the actual attack algorithms and omit the superscript $(p)$ when there is no ambiguity.
\section{Related Works} \label{sec:related_works}

\textbf{Adversarial Attacks:}
The existence of adversarial examples is first identified in~\cite{2013Intriguing}, which focuses on $l_2$ norm-bounded adversarial perturbations.
Fast gradient sign method (FGSM)~\citep{2014Explaining} introduces an efficient approach by generating perturbations bounded by its $l_\infty$ norm in a single step.
Furthermore, projected gradient descent (PGD) \citep{madry2017towards} extends and improves FGSM \citep{kurakin2017adversarial} by iterative updating and random initialization.
In addition to these white-box attacks where the attackers have full access to the models, there are also several black-box attacks~\citep{dong2018boosting, andriushchenko2020square} where the attackers' access is restricted.
AutoAttack (AA)~\citep{croce2020reliable} is an ensemble of both white-box and black-box attacks to ensure a more reliable evaluation of model's robustness.

\textbf{Adversarial Training:}
Adversarial training~\citep{madry2017towards, Croce2019MinimallyDA, sehwag2021, rebuffi2021data, 2021Improving, rade2021helperbased, Cui2023DecoupledKD, Wang2023BetterDM} has emerged as a popular and reliable framework to obtain robust models~\citep{Athalye2018ObfuscatedGG, croce2020reliable}. Under this framework, we first generate adversarial examples and update model parameters based on these examples in each mini-batch update.
Different adversarial training variants, such as TRADES~\citep{Zhang2019TheoreticallyPT} and MART~\citep{Wang2020Improving}, may have different loss objective functions for generating adversarial examples and updating model parameters.
Furthermore, compared with training on clean inputs, adversarial training is shown to suffer more from overfitting~\citep{rice2020overfitting, liu2021impact}.
In this regard, self-adaptive training (SAT)~\citep{huang2020self}, which utilizes historical predictions as the soft label, has demonstrated its efficacy in improving the generalization.

\textbf{Sparse Perturbations:} Adversarial budget defined by $l_1$ norm is the tightest convex hull of the one defined by $l_0$ norm.
In this context, SLIDE \citep{tramer2019adversarial} extends PGD and employs $k$-coordinate ascent to generate $l_1$ bounded perturbations.
Similarly, AutoAttack-$l_1$ (AA-$l_1$)~\citep{croce2021mind} extends AA to the $l_1$ case.
However, AA-$l_1$ is found to generate non-sparse perturbations that SLIDE fails to discover~\citep{jiang2023towards}, indicating that $l_1$ bounded perturbations are not necessarily sparse.
Therefore, we use $l_0$ norm to strictly enforce sparsity.
It is challenging to optimize over an adversarial budget defined by $l_0$ norm, because of non-convex adversarial budgets.
While naively applying PGD in this case turns out sub-optimal, there are several black-box attacks, including CornerSearch \citep{croce2019sparse} and Sparse-RS \citep{croce2022sparse}, and white-box attacks, including Sparse Adversarial and Interpretable Attack Framework (SAIF) \citep{imtiaz2022saif} and Sparse-PGD (sPGD) \citep{zhong2024efficient}, which address the optimization challenge of finding $l_0$ bounded perturbations.
Ultimately, Sparse-AutoAttack (sAA)~\citep{zhong2024efficient}, combining the most potent white-box and black-box attacks, emerges as the most powerful sparse attack.

\textbf{Fast Adversarial Training:}
While effective, adversarial training is time-consuming due to the use of multi-step attacks. To reduce the computational overhead, some studies \citep{Shafahi2019AdversarialTF, Zhang2019YouOP} employ faster one-step attacks in adversarial training.
However, the training based on these weaker attacks may suffer from catastrophic overfitting (CO) \citep{Kang2021UnderstandingCO}, where the model overfits to these weak attacks instead of achieving true robustness against a variety of attacks. 
CO is shown to arise from distorted decision boundary caused by \textit{sub-optimal perturbation magnitudes} \citep{Kim2020UnderstandingCO}.
There are several methods proposed to mitigate CO, including aligning the gradients of clean and adversarial samples~\citep{Andriushchenko2020UnderstandingAI}, adding stronger noise to clean sample~\citep{Jorge2022MakeSN}
, adaptive step size~\citep{huang2023fast}, regularizing abnormal adversarial samples \citep{lin2024eliminating}, adding layer-wise weight perturbations \citep{linlayer}, and penalizing logits discrepancy \citep{li2023understanding}.
Furthermore, compared to its $l_2$ and $l_\infty$ counterparts, CO is caused by overfitting to sparse perturbations during $l_1$ adversarial training~\citep{jiang2023towards}.
To address this issue, Fast-EG-$l_1$~\citep{jiang2023towards} is introduced to generate $l_1$ bounded perturbations by Euclidean geometry instead of coordinate ascent.
In this work, we investigate fast adversarial training against $l_0$ bounded perturbations.

 \section{Unique Challenges in Fast $l_0$ Adversarial Training} \label{sec:nonsmooth_loss}
To obtain robust models against sparse perturbations, preliminary efforts use $20$-step sPGD in adversarial training, which introduces significant computational overhead.
To accelerate training, we explore using $1$-step sPGD in adversarial training. However, as reported in Table~\ref{tab:intro}, the models obtained in this way exhibit weak robustness against stronger sparse attacks, such as sAA. In this section, we study the underlying factors that make fast $l_0$ adversarial training challenging.

\subsection{\texorpdfstring{Catastrophic Overfitting in $l_0$ Adversarial Training}{Catastrophic Overfitting in l_0 Adversarial Training}} \label{sec:co}
\begin{figure}[ht]
\vspace{-1em}
    \centering
    \small
    \subfigure[$\epsilon_{train}=20$]{\includegraphics[width=0.32\textwidth]{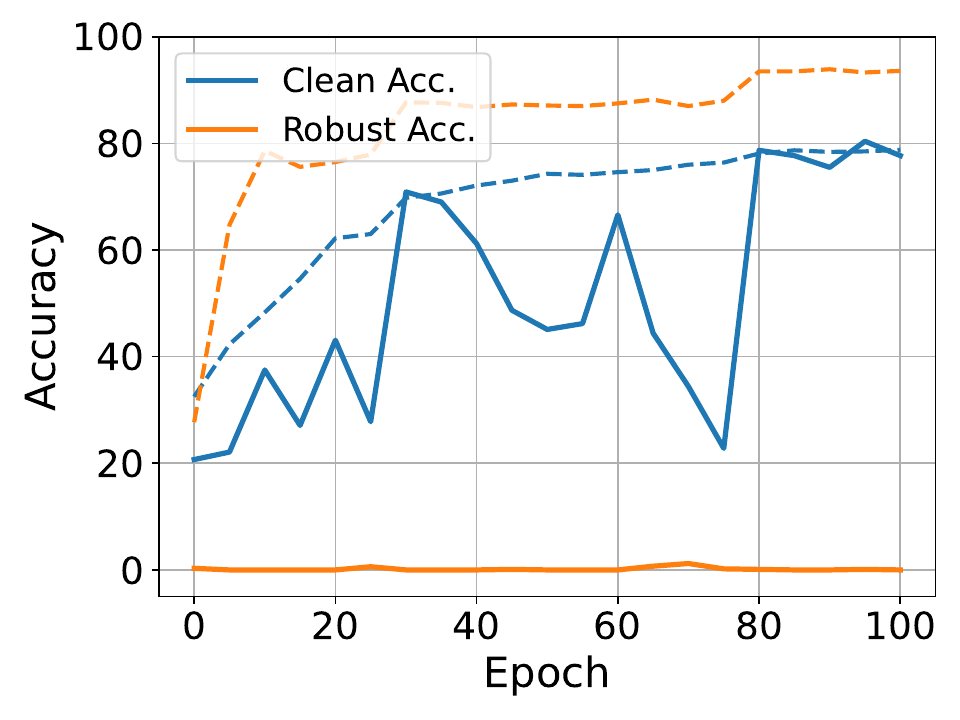}}
    \subfigure[$\epsilon_{train}=40$]{\includegraphics[width=0.32\textwidth]{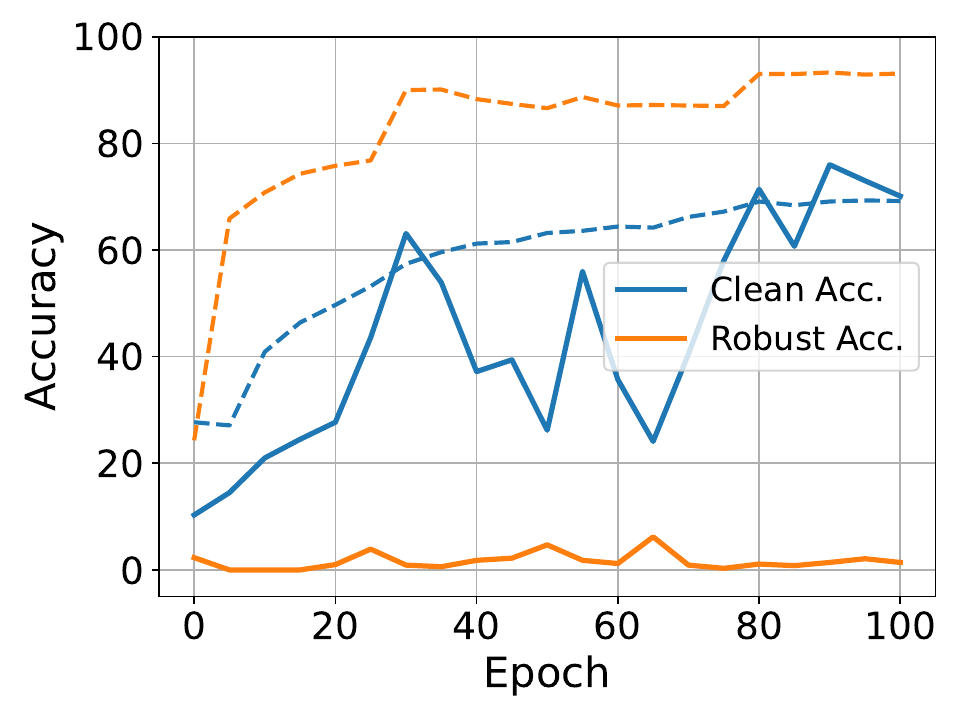}}
    \subfigure[$\epsilon_{train}=120$]{\includegraphics[width=0.32\textwidth]{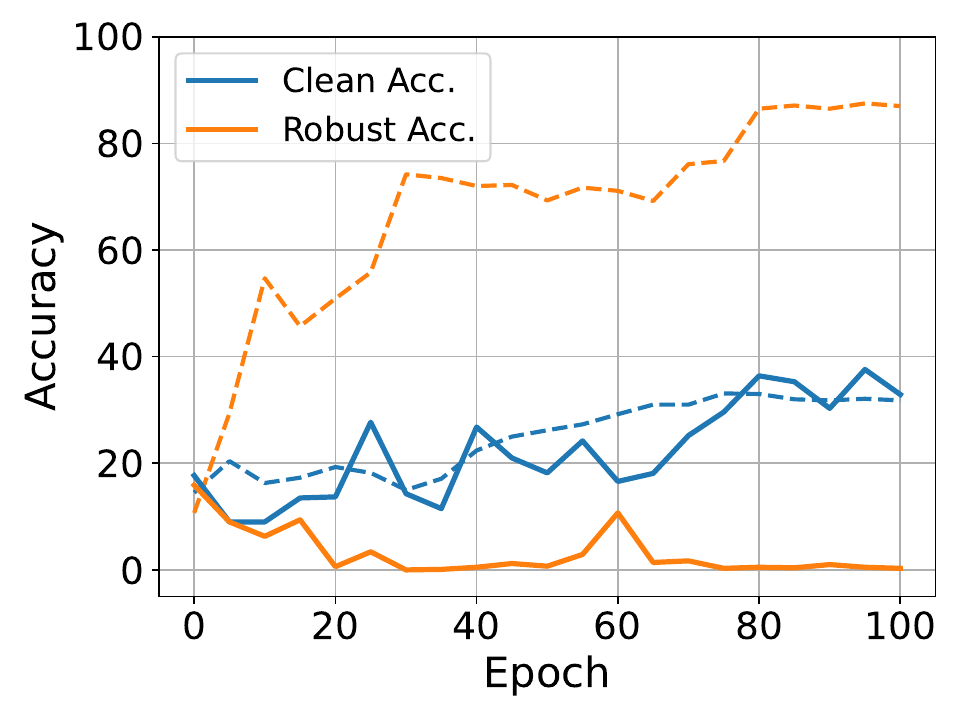}}
    \vspace{-0.5em}
    \caption{The learning curves of adversarial training against $1$-step sPGD with random noise initialization. The models are PreactResNet-18 trained on CIFAR-10. The dashed and the solid lines represent the accuracy of the training and the test set, respectively. The test robust accuracy is based on sAA with $\epsilon = 20$. The values of $\epsilon$ used in training are shown as $\epsilon_{train}$, the training robust accuracy is based on the $1$-step sPGD with $\epsilon_{train}$.}
    \label{fig:co}
    \vspace{-0.5em}
\end{figure}
We plot the learning curves of adversarial training using $1$-step sPGD in Figure~\ref{fig:co}.
Specifically, we adopt the multi-$\epsilon$ strategy~\citep{jiang2023towards, zhong2024efficient} and allow for different adversarial budget sizes, i.e., $\epsilon$, during training and testing. 
The results in Figure~\ref{fig:co} indicate that CO happens in all configurations. Moreover, our observations of CO in $l_0$ cases are different from other cases in several aspects.
First, random initialization of adversarial perturbation, proven effective in $l_\infty$, $l_2$ and $l_1$ cases, does not yield similar results in the $l_0$ case.
In addition, Figure~\ref{fig:co} showcases that the training accuracy on the inputs perturbed by $1$-step sPGD is even higher than their clean counterparts.
What's more, when CO happens in $l_\infty$, $l_2$ and $l_1$ cases, the model sharply achieves perfect robustness against $1$-step attacks but zero robustness against multi-step attacks, both in few mini-batch updates. Such phenomenon is not observed in $l_0$ cases. By contrast, we observe dramatic performance fluctuations on clean examples throughout the training process, even in the fine-tuning phase.
Such training instability indicates a non-smooth landscape of the loss function in the parameter space: a subtle change in parameters $\vtheta$ leads to abrupt fluctuation in the loss.

Moreover, we apply existing CO mitigation methods (ATTA \citep{Zheng2019EfficientAT}, Free-AT \citep{Shafahi2019AdversarialTF}, GradAlign (GA) \citep{Andriushchenko2020UnderstandingAI}, Fast-BAT \citep{zhang2022revisiting}, FLC Pool \citep{grabinski2022frequencylowcut}, N-AAER \citep{lin2024eliminating}, N-LAP \citep{linlayer}, and NuAT \citep{Sriramanan2021TowardsEA}), which are effective in $l_\infty$ and $l_2$ cases, in fast $l_0$ adversarial training. As shown in Table \ref{tab:exsiting_method}, \textbf{most methods designed for other $l_p$ norms ($p \geq 1$) turn out ineffective at all in the $l_0$ scenario}. Although NuAT achieves non-trivial performance, there is still a significant performance gap to multi-step methods. Therefore, it is imperative to ascertain more essential causes of CO in fast $l_0$ adversarial training and formulate effective mitigation strategies accordingly.
\begin{table}[htb]
\vspace{-0.5em}
    \centering
    \caption{Comparison between existing CO mitigation methods and multi-step method (sTRADES) in robust accuracy (\%) by sAA. The target sparsity level $\epsilon=20$. We compare PreAct ResNet-18 \citep{he2016deep} models trained on CIFAR-10 \citep{krizhevsky2009learning}. The \textit{italic numbers} indicate catastrophic overfitting (CO) happens. Note that all baselines are tuned by a thorough hyperparameter search.}
    \small
    \tabcolsep=0.6em
    \begin{tabular}{c|c c c c c c c c | c}
    \toprule[1.5pt]
    Method & ATTA & Free-AT& GA & Fast-BAT & \makecell{FLC\\ Pool} & N-AAER & N-LAP & NuAT & sTRADES\\
    \midrule[1pt]
    Robust Acc. & \textit{0.0} & 8.9& \textit{0.0} & 14.1 & \textit{0.0} & \textit{0.1}
  & \textit{0.0} & 51.9 & 61.7\\
    \bottomrule[1.5pt]
    \end{tabular}
    \label{tab:exsiting_method}
    \vspace{-1.5em}
\end{table}

\subsection{Unique Cause of CO in $l_0$ Adversarial Training: Sub-optimal Perturbation Location}

In $l_\infty$ and $l_2$ cases, CO occurs due to distorted decision boundary caused by \textit{sub-optimal perturbation magnitude} \citep{Kim2020UnderstandingCO}. To ascertain if this applies to $l_0$ adversarial training, we evaluate the robustness accuracy of models trained by $1$-step sAT with varying $\epsilon_{train}$ against interpolations between the clean inputs and the perturbed ones by $1$-step sPGD. Table \ref{tab:interpolate} shows that we cannot find successful adversarial examples through such simple interpolations.
\begin{table}[t]
    \centering
    \caption{Robust accuracy of the models obtained by $1$-step sAT with different $\epsilon_{train}$ against the interpolation between perturbations generated by $1$-step sPGD ($\epsilon=20$) and their corresponding clean examples, where $\alpha$ denotes the interpolation factor, i.e., $\vx_{interp} = \vx + \alpha\cdot\vdelta$. The results of sAA are also reported.}
    \small
    \begin{tabular}{c|c c c c c c|c}
    \toprule[1.5pt]
     $\alpha$ & 0.0  & 0.2 & 0.4& 0.6& 0.8& 1.0 & \textbf{sAA}\\
    \midrule[1pt]
    $\epsilon_{train}=20$ & 77.5 & \textbf{69.1}  & 80.4 & 88.0 & 90.2 & 90.4 & \textbf{0.0}\\
    $\epsilon_{train}=40$ & 70.2 & \textbf{64.3} & 79.8 & 87.4 & 89.6 & 89.6 & \textbf{0.0}\\
    $\epsilon_{train}=120$ & 32.5 & \textbf{24.5} &  41.5 & 65.2 & 72.8 & 67.6 & \textbf{0.0}\\
    \bottomrule[1.5pt]
    \end{tabular}
    \vspace{-0.5em}
    \label{tab:interpolate}
\end{table}

By contrast, we notice that the adversarial perturbations generated by 1-step sPGD during training are almost completely different from those generated by sAA in location rather than magnitude (see in Figure \ref{fig:delta_l0}). Combining with the results in Table \ref{tab:interpolate}, we can demonstrate that \textit{CO in $l_0$ adversarial training is primarily due to sub-optimal perturbation locations rather than magnitudes}.

Due to non-convexity and unique project operator to $l_0$ adversarial budget, most existing methods designed for $l_2$ or $l_\infty$ cases do not help improve the perturbation locations.
Instead, we study multi-$\epsilon$ strategy, which is particularly effective in multi-step $l_0$ adversarial training \citep{zhong2024efficient}. Figure \ref{fig:overlap} indicates that the perturbations generated by $1$-step attack with larger $\epsilon_{train}$ overlap more with those generated by sAA with a smaller and fixed $\epsilon_{test}$ in terms of location. \textbf{These findings suggest that the sub-optimal location issue brought by $1$-step attacks can be mitigated to some extent by multi-$\epsilon$ strategy.
However, as illustrated in Figure \ref{fig:co}, a larger $\epsilon_{train}$, in turn, leads to unstable training and degraded clean accuracy.}
To address this challenge, we investigate the loss landscape in the subsequent sections.

\begin{figure}[H]
    \centering
    \small
    \subfigure[\small Location Difference]
    {\includegraphics[width=0.45\textwidth]{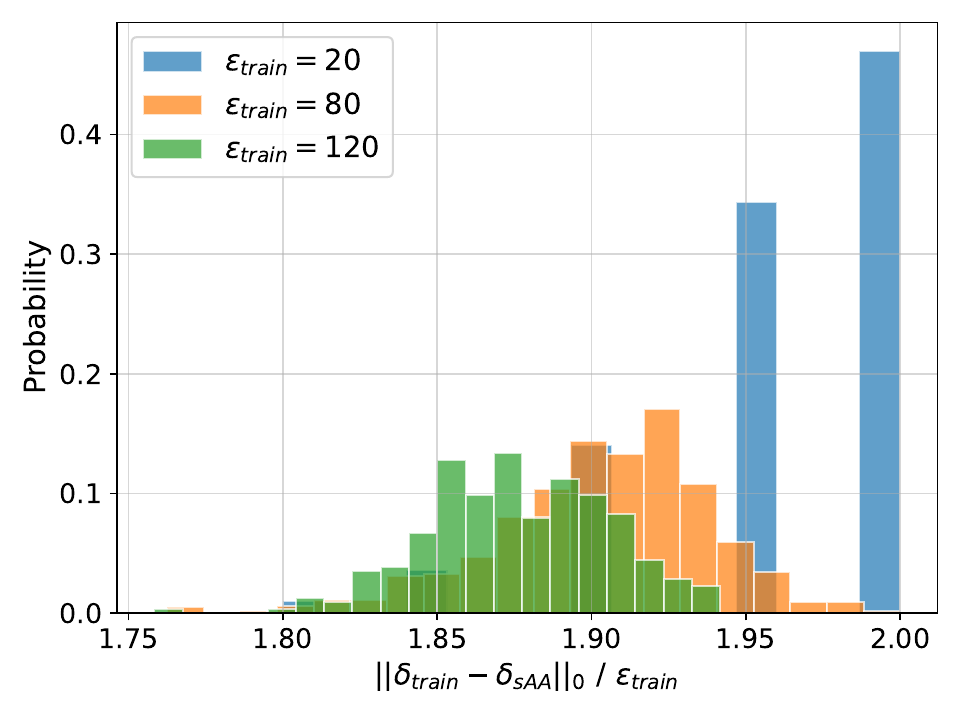}\label{fig:delta_l0}}
    \quad
    \subfigure[\small Location Overlapping]
    {\includegraphics[width=0.45\textwidth]{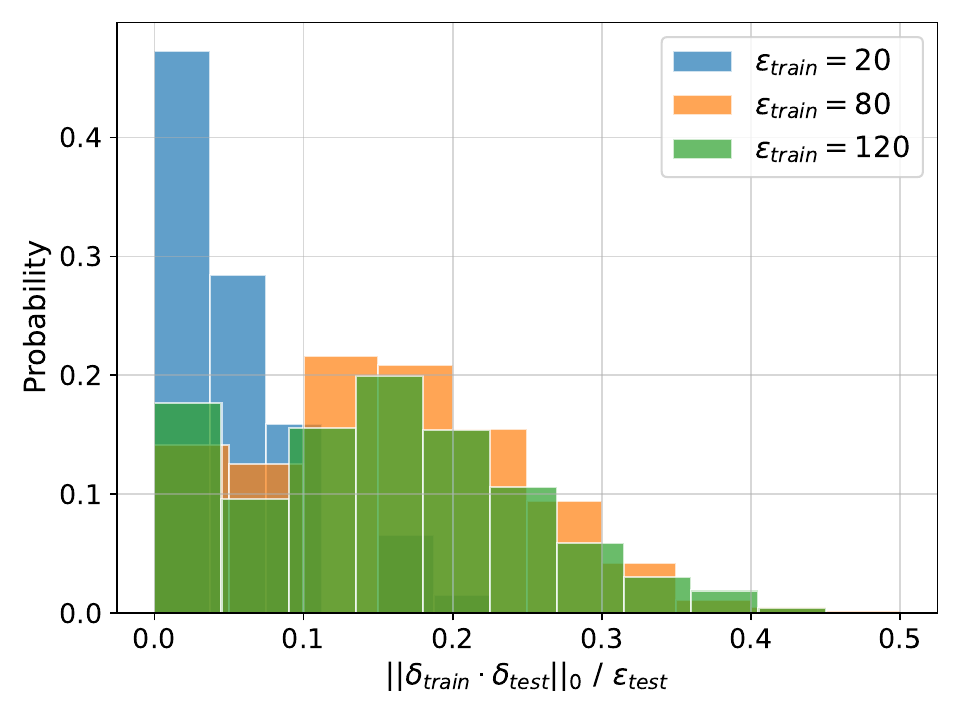}\label{fig:overlap}}
    \vspace{-0.5em}
    \caption{Visualization of location difference and location overlapping. \textbf{(a)} The distribution of the normalized $l_0$ distance between training adversarial examples generated by 1-step sPGD and sAA. The models trained on $20$-step sAT with different training $\epsilon$ are evaluated. \textbf{(b)} The distribution of the location overlapping rate between the perturbations generated by attacks used in training ($20$-step sPGD) and test (sAA), where $\epsilon_{test}=20$. The models trained on $20$-step sAT with different training $\epsilon$ are evaluated.}
    \label{fig:loc}
    \vspace{-1em}
\end{figure}

\section{Fast $l_0$ Adversarial Training Requires Loss Smoothing}\label{sec:method}
In this section, we investigate the landscape of adversarial loss through theoretical analyses and numerical experiments. Ultimately, we provide our solution to stabilize and improve the performance of fast $l_0$ adversarial training.

\subsection{Theoretical Analysis} \label{sec:theory}
We first provide theoretical analyses on the smoothness of adversarial loss function. Similar to ~\cite{liu2020loss}, we assume the first-order smoothness of the model's outputs $\{f_i\}_{i = 0}^{K - 1}$.
\begin{assumption} \label{assum_lip} \textbf{(First-order Lipschitz condition)}
    $\forall i\in\{0,1,...,K-1\}$, the function $f_i$ satisfies the following first-order Lipschitz conditions, with constants $L_\vtheta$, $L_\vx$:
    \begin{equation}
    \forall \vx, \vtheta_1, \vtheta_2, \ \ \|f_i(\vx,\vtheta_1)-f_i(\vx,\vtheta_2)\| \leq L_{\vtheta}\|\vtheta_1-\vtheta_2\|
    \end{equation}
    \vspace{-1em}
    \begin{equation}
    \forall \vtheta, \vx_1, \vx_2, \ \ \|f_i(\vx_1,\vtheta)-f_i(\vx_2,\vtheta)\| \leq L_{\vx}\|\vx_1-\vx_2\|
\end{equation}
\end{assumption}
We then study the first-order smoothness of the adversarial loss objective function $\gL_\epsilon(\vx, \vtheta)$.
\begin{theorem} \label{theorem_lip} \textbf {(Lipschitz continuity of adversarial loss)}
    If Assumption \ref{assum_lip} holds, we have: 
    \begin{equation}
        \forall \vx, \vtheta_1, \vtheta_2,\ \ \|\mathcal{L}_\epsilon(\vx, \vtheta_1) - \mathcal{L}_\epsilon(\vx, \vtheta_2)\| \leq A_{\vtheta}\|\vtheta_1 - \vtheta_2\|,
    \end{equation}
    The constant $A_{\vtheta} =  2\sum_{i\in\mathcal{S}_+}y_iL_\vtheta$ where $\mathcal{S}_+=\{i~|~y_i\geq 0, h_i(\vx+\vdelta_1, \vtheta_2)>h_i(\vx+\vdelta_1, \vtheta_1)\}$, $\vdelta_1 \in \argmax_{\vdelta \in \gS_\epsilon} \gL(\vx + \vdelta, \vtheta)$ and $\vdelta_2 \in \argmax_{\vdelta \in \gS_\epsilon} \gL(\vx + \vdelta, \vtheta)$.
\end{theorem}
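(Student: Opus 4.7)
The plan is to combine an envelope-type argument that removes the $\max$ inside $\mathcal{L}_\epsilon$ with the log-softmax identity and the Lipschitz bounds of Assumption~\ref{assum_lip}. Without loss of generality I would assume $\mathcal{L}_\epsilon(\vx,\vtheta_1)\ge \mathcal{L}_\epsilon(\vx,\vtheta_2)$ and let $\vdelta_1$ be a maximizer at $\vtheta_1$ (the symmetric case is handled by swapping indices). Since $\vdelta_1\in\gS_\epsilon$ is feasible (though generally sub-optimal) for $\vtheta_2$, we have $\mathcal{L}(\vx+\vdelta_1,\vtheta_2)\le \mathcal{L}_\epsilon(\vx,\vtheta_2)$, so $0\le \mathcal{L}_\epsilon(\vx,\vtheta_1)-\mathcal{L}_\epsilon(\vx,\vtheta_2)\le \mathcal{L}(\vx+\vdelta_1,\vtheta_1)-\mathcal{L}(\vx+\vdelta_1,\vtheta_2)$. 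This collapses the adversarial loss gap to a pointwise cross-entropy gap at the common input $\vx+\vdelta_1$, which is the key reduction: from here the problem becomes a classical Lipschitz estimate for the non-adversarial loss as a function of $\vtheta$.

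Second, I would expand the pointwise gap via the log-softmax identity $\log h_i=f_i-\log\!\sum_j\exp(f_j)$, rewriting it as $\sum_i y_i\bigl[\log h_i(\vx+\vdelta_1,\vtheta_2)-\log h_i(\vx+\vdelta_1,\vtheta_1)\bigr]$. Because $y_i\ge 0$, the indices outside $\mathcal{S}_+$ (where the log ratio is non-positive) only decrease the sum and may be dropped to obtain an upper bound restricted to $\mathcal{S}_+$. For each $i\in\mathcal{S}_+$ I would apply the triangle inequality to the decomposition $\log h_i(\vtheta_2)-\log h_i(\vtheta_1)=\bigl[f_i(\vtheta_2)-f_i(\vtheta_1)\bigr]+\bigl[\log\!\sum_j e^{f_j(\vtheta_1)}-\log\!\sum_j e^{f_j(\vtheta_2)}\bigr]$, suppressing the common $\vx+\vdelta_1$ argument. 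The first bracket is bounded by $L_\vtheta\|\vtheta_1-\vtheta_2\|$ directly from Assumption~\ref{assum_lip}. For the log-sum-exp bracket, I would use that its $\vtheta$-gradient equals $\sum_j h_j\,\nabla_\vtheta f_j$, a convex combination (since $\sum_j h_j=1$) of vectors of norm at most $L_\vtheta$; therefore this map is itself $L_\vtheta$-Lipschitz in $\vtheta$. Adding the two pieces gives $2L_\vtheta\|\vtheta_1-\vtheta_2\|$ per index in $\mathcal{S}_+$, and weighting by $y_i$ and summing yields exactly $A_\vtheta\|\vtheta_1-\vtheta_2\|$ as claimed.

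The main obstacle I expect is the bookkeeping linking $\mathcal{S}_+$ to the envelope step: since $\mathcal{S}_+$ is defined relative to $\vdelta_1$ (the maximizer at $\vtheta_1$), the envelope inequality must use exactly $\vdelta_1$ as the common perturbation, otherwise the restriction to $\mathcal{S}_+$ would no longer constitute a valid upper bound on the sum. A secondary subtlety is the log-sum-exp step, where the identity $\sum_j h_j=1$ is critical: without exploiting that the softmax outputs form a probability vector, the convex-combination argument would pick up a multiplicative factor of $K$, yielding a Lipschitz constant that scales with the number of classes rather than the sharper $2\sum_{i\in\mathcal{S}_+} y_i L_\vtheta$ stated in the theorem.
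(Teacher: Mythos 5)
Your proposal is correct and reproduces the paper's overall skeleton: the same envelope argument (feasibility of $\vdelta_1$ at $\vtheta_2$ plus the WLOG ordering collapses the adversarial-loss gap to the pointwise gap $\mathcal{L}(\vx+\vdelta_1,\vtheta_1)-\mathcal{L}(\vx+\vdelta_1,\vtheta_2)$), the same restriction to $\mathcal{S}_+$ by discarding the non-positive terms, and the same final constant $A_\vtheta=2\sum_{i\in\mathcal{S}_+}y_iL_\vtheta$. Where you diverge is in how the per-index factor $2L_\vtheta$ is obtained. The paper writes $-\log h_i=\log\bigl(1+\sum_{j\neq i}\exp(f_j-f_i)\bigr)$ and applies the mediant inequality twice: first to drop the additive $1$, then to bound the log-ratio of sums by the worst single index $k$, which yields $|f_k(\vtheta_2)-f_k(\vtheta_1)|+|f_i(\vtheta_2)-f_i(\vtheta_1)|\leq 2L_\vtheta\|\vtheta_1-\vtheta_2\|$. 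You instead split $\log h_i=f_i-\mathrm{LSE}(f_0,\dots,f_{K-1})$ and use that log-sum-exp is $1$-Lipschitz (its gradient being the convex combination $\sum_j h_j\nabla_\vtheta f_j$ with $\sum_j h_j=1$), giving $L_\vtheta+L_\vtheta$ per index. Both routes give the identical, $K$-independent constant; yours is arguably cleaner and more modular, while the paper's mediant-inequality route has the side benefit of exhibiting the asymptotic tightness condition (their Inequality on the dominance of a single logit gap), which the authors use to claim the bound is tight. One cosmetic caveat in your version: Assumption~\ref{assum_lip} only asserts Lipschitz continuity of $f_i$, not differentiability, so the gradient identity for log-sum-exp should be replaced by the norm-free statement that $\mathrm{LSE}$ is $1$-Lipschitz with respect to the sup-norm of its arguments, i.e.\ $|\mathrm{LSE}(f(\vtheta_1))-\mathrm{LSE}(f(\vtheta_2))|\leq\max_j|f_j(\vtheta_1)-f_j(\vtheta_2)|\leq L_\vtheta\|\vtheta_1-\vtheta_2\|$; with that substitution your argument is fully rigorous.
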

The proof is deferred to Appendix \ref{app:proof_theorem_lip}, in which we can see the upper bound in Theorem \ref{theorem_lip} is tight. Theorem \ref{theorem_lip} indicates that the adversarial loss $\mathcal{L}_\epsilon(\vx, \vtheta)$ is Lipschitz continuous, consistent with~\cite{liu2020loss}. 

To study the second-order smoothness of $\mathcal{L}_\epsilon(\vx, \vtheta)$, we start with the following assumption.
\begin{assumption} \label{assum_lip_g} \textbf{(Second-order Lipschitz condition)}
    $\forall i\in\{0,1,...,K-1\}$, the function $f_i$ satisfies the following second-order Lipschitz conditions, with constants $L_{\theta\theta}$, $L_{\theta\vx}$:
    \begin{equation}
    \forall \vx, \vtheta_1, \vtheta_2,\ \ \|\nabla_{\vtheta} f_i(\vx,\vtheta_1)-\nabla_{\vtheta} f_i(\vx,\vtheta_2)\| \leq L_{\vtheta\vtheta}\|\vtheta_1-\vtheta_2\|\label{ineq:assum2}
    \end{equation}
    \begin{equation}
    \forall \vtheta, \vx_1, \vx_2,\ \ \|\nabla_{\vtheta} f_i(\vx_1,\vtheta)-\nabla_{\vtheta} f_i(\vx_2,\vtheta)\| \leq L_{\vtheta\vx}\|\vx_1-\vx_2\|\label{ineq:assum4}
\end{equation}
\end{assumption}
\begin{theorem}
     \label{theorem_lip_g} \textbf{(Lipschitz smoothness of adversarial loss)}
   If Assumption \ref{assum_lip} and \ref{assum_lip_g} hold, we have:
    \begin{equation} \label{eq:smooth}
        \forall \vx, \vtheta_1, \vtheta_2,\ \ \|\nabla_{\vtheta}\mathcal{L}_\epsilon(\vx, \vtheta_1) - \nabla_{\vtheta}\mathcal{L}_\epsilon(\vx, \vtheta_2)\| \leq A_{\vtheta\vtheta}\|\vtheta_1 - \vtheta_2\| + B_{\vtheta\vdelta}
    \end{equation}
    The constants $A_{\vtheta\vtheta} = L_{\vtheta\vtheta}$ and $B_{\vtheta\vdelta}= L_{\vtheta\vx} \|\vdelta_1 - \vdelta_2\| + 4 L_\vtheta$ where $\vdelta_1 \in \argmax_{\vdelta \in \gS_\epsilon} \gL(\vx + \vdelta, \vtheta_1)$ and $\vdelta_2 \in \argmax_{\vdelta \in \gS_\epsilon} \gL(\vx + \vdelta, \vtheta_2)$.
\end{theorem}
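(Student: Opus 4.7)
The plan is to invoke Danskin's envelope theorem so that, for each $j\in\{1,2\}$, we may write $\nabla_{\vtheta}\mathcal{L}_\epsilon(\vx,\vtheta_j)=\nabla_{\vtheta}\mathcal{L}(\vx+\vdelta_j,\vtheta_j)$, where $\vdelta_j$ is a fixed maximizer of the inner problem at $\vtheta_j$; the task then reduces to bounding $\|\nabla_{\vtheta}\mathcal{L}(\vx+\vdelta_1,\vtheta_1)-\nabla_{\vtheta}\mathcal{L}(\vx+\vdelta_2,\vtheta_2)\|$. Next I would apply the triangle inequality, inserting the intermediate point $\nabla_{\vtheta}\mathcal{L}(\vx+\vdelta_1,\vtheta_2)$:
\begin{equation*}
\|\nabla_{\vtheta}\mathcal{L}_\epsilon(\vx,\vtheta_1)-\nabla_{\vtheta}\mathcal{L}_\epsilon(\vx,\vtheta_2)\|
\leq \|\nabla_{\vtheta}\mathcal{L}(\vx+\vdelta_1,\vtheta_1)-\nabla_{\vtheta}\mathcal{L}(\vx+\vdelta_1,\vtheta_2)\|
+\|\nabla_{\vtheta}\mathcal{L}(\vx+\vdelta_1,\vtheta_2)-\nabla_{\vtheta}\mathcal{L}(\vx+\vdelta_2,\vtheta_2)\|,
\end{equation*}
so the first summand, in which only $\vtheta$ changes, should produce the $A_{\vtheta\vtheta}\|\vtheta_1-\vtheta_2\|$ contribution, and the second, in which only the input changes, should produce $B_{\vtheta\vdelta}$.

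For each summand I would use the closed-form identity $\nabla_{\vtheta}\mathcal{L}(\vx,\vtheta)=\sum_i (h_i(\vx,\vtheta)-y_i)\,\nabla_{\vtheta}f_i(\vx,\vtheta)$, which follows from differentiating the cross-entropy together with $\sum_i y_i=1$. Applying the bilinear identity $ab-cd=(a-c)b+c(b-d)$ inside the sum splits each summand into two pieces: (a) a change in $\nabla_{\vtheta}f_i$, controlled via the second-order Lipschitz conditions of Assumption~\ref{assum_lip_g}, producing $L_{\vtheta\vtheta}\|\vtheta_1-\vtheta_2\|$ in the first summand and $L_{\vtheta\vx}\|\vdelta_1-\vdelta_2\|$ in the second; and (b) a change in the softmax outputs $h_i$, controlled by combining $\|\nabla_{\vtheta}f_i\|\leq L_\vtheta$ from Assumption~\ref{assum_lip} with the total-variation estimate $\sum_i|h_i^{(1)}-h_i^{(2)}|\leq 2$ valid for any two probability vectors. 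Each such softmax-remainder contributes at most $2L_\vtheta$, and the two instances combine into the $4L_\vtheta$ term inside $B_{\vtheta\vdelta}$.

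The main obstacle is twofold. First, invoking the envelope theorem in the non-convex $l_0$ regime requires care: $\gS_\epsilon$ is a combinatorial union of low-dimensional convex pieces and the argmax $\vdelta^\star(\vtheta)$ can jump between pieces as $\vtheta$ varies, so I would fix measurable selectors $\vdelta_1,\vdelta_2$ of the argmax correspondences and verify that the bound above is uniform in that selection. Second, matching the sharp constants $A_{\vtheta\vtheta}=L_{\vtheta\vtheta}$ and coefficient $1$ on $L_{\vtheta\vx}\|\vdelta_1-\vdelta_2\|$ requires exploiting the zero-sum structure of $\vh-\vy$ and of $\vh^{(1)}-\vh^{(2)}$, replacing each $\nabla_{\vtheta}f_i$ by its deviation from a common reference vector so as to avoid the loose $\sum_i|h_i-y_i|\leq 2$ estimate; without this refinement the naive decomposition yields only the weaker constants $2L_{\vtheta\vtheta}$ and $2L_{\vtheta\vx}$.
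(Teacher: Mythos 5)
Your skeleton is the same as the paper's: pass to the pointwise gradients $\nabla_{\vtheta}\gL(\vx+\vdelta_j,\vtheta_j)$, insert one intermediate gradient via the triangle inequality so that one summand varies only $\vtheta$ and the other only the input, use the explicit cross-entropy gradient formula, and bound each summand by a second-order Lipschitz term plus a $2L_\vtheta$ softmax remainder (the paper routes through $\nabla_{\vtheta}\gL(\vx+\vdelta_2,\vtheta_1)$ rather than your $\nabla_{\vtheta}\gL(\vx+\vdelta_1,\vtheta_2)$, which is immaterial). The one place you do not land is the constants, and your own diagnosis of why is right, but your proposed cure is the wrong one. The issue is your bilinear splitting $(h_i-y_i)\nabla f_i \mapsto (h_i^{(1)}-h_i^{(2)})\nabla f_i^{(1)} + (h_i^{(2)}-y_i)(\nabla f_i^{(1)}-\nabla f_i^{(2)})$, which charges the second-order Lipschitz constants against the weight $h^{(2)}-y$ of total variation up to $2$. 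Subtracting a common reference vector from the $\nabla f_i^{(1)}-\nabla f_i^{(2)}$ does not recover coefficient $1$: any concrete choice (one of the differences, or a Chebyshev center) still leaves you with a factor strictly larger than $1$ after multiplying by $\sum_i|h_i^{(2)}-y_i|\le 2$. The paper instead never forms $h-y$ at all: it writes $\nabla_\vtheta\gL=\sum_j h_j\nabla_\vtheta f_j-\sum_i y_i\nabla_\vtheta f_i$ and treats the two sums separately. The $y$-weighted sum is a convex combination (weights nonnegative, summing to $1$), so its variation is bounded by $\max_i\|\nabla f_i^{(1)}-\nabla f_i^{(2)}\|\le L_{\vtheta\vtheta}\|\vtheta_1-\vtheta_2\|$ (resp.\ $L_{\vtheta\vx}\|\vdelta_1-\vdelta_2\|$) with coefficient exactly $1$, while the $h$-weighted sums are each bounded in norm by $L_\vtheta$ and their difference crudely by $2L_\vtheta$, giving $4L_\vtheta$ in total. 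With that regrouping your argument closes with the stated constants. Your caution about Danskin in the non-convex $l_0$ setting is fair (compactness of $\gS_\epsilon$ suffices for the max to be attained, but the selection can jump); the paper sidesteps this exactly as you propose, by fixing maximizers $\vdelta_1,\vdelta_2$ and proving a bound uniform over that choice, so nothing further is needed there.
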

The proof is deferred to Appendix \ref{app:proof_theorem_lip_g}. Theorem \ref{theorem_lip_g} indicates the adversarial loss objective function $\gL_\epsilon(\vx, \vtheta)$ w.r.t. the model parameter $\vtheta$ is no longer smooth. That is to say, gradients in arbitrarily small neighborhoods in the $\vtheta$-space can change discontinuously. Furthermore, the degree of discontinuity is indicated by the value of $B_{\vtheta\vdelta} = L_{\vtheta\vx} \|\vdelta_1 - \vdelta_2\| + 4 L_\vtheta$, so a larger $\|\vdelta_1-\vdelta_2\|$ can intensify the gradient discontinuity.
Additionally, as elucidated by \textit{Theorem 2} in \cite{liu2020loss}, the gradients are non-vanishing in adversarial training. A large $B_{\vtheta\vdelta}$ introduces large gradient magnitudes asymptotically, making optimization challenging.

In practice, we may use non-smooth activations, like ReLU, which do not strictly satisfy Assumption \ref{assum_lip_g}. For example, the gradient of ReLU changes abruptly in the neighborhood around $0$. In this regard, we provide a more detailed analysis of this case in Appendix \ref{sec:ap_relu}, which suggests that our analyses can be straightforwardly extended to networks with non-smooth activations.

Without the loss of generality, the Lipschitz properties in Assumption~\ref{assum_lip} and \ref{assum_lip_g} can be based on any \textit{proper} $l_p$ norm, i.e., $p \in [1, +\infty]$, which, however, does not include $l_0$ norm.
Correspondingly, $\|\vdelta_1 - \vdelta_2\|$ in the expression of $B_{\vtheta\vdelta}$ is based on the same norm as in the assumptions.
On the popular benchmark CIFAR-10, the commonly used values of $\epsilon$ in the $l_0$, $l_1$, $l_2$ and $l_\infty$ cases are $360$\footnote{In \cite{zhong2024efficient}, the $l_0$ adversarial budget for training on CIFAR-10 is $120$ \textit{in the pixel space} of RGB images, so the equivalent $l_0$ norm \textit{in the feature space} is $360$. Note that the adversarial budget mentioned in the experiment part is still in the pixel space.}, $24$, $0.5$ and $8/255$, respectively \citep{madry2017towards, croce2021mind, jiang2023towards, zhong2024efficient}.
In Appendix~\ref{app:perturbation_bound}, we discuss the numerical upper bound of $\|\vdelta_1 - \vdelta_2\|$ when the Lipschitz assumptions are based on different proper norms. \textbf{The results demonstrate that the upper bound of $\|\vdelta_1 - \vdelta_2\|$ in the $l_0$ case is always significantly larger than other cases, indicating a more craggy adversarial loss function in $l_0$ adversarial training.}
Moreover, to corroborate the Lipschitz smoothness assumption in Inequality (\ref{ineq:assum4}), we compare the distances between the gradients induced by one-step and multi-step attacks in Appendix~\ref{app:grad}.


\subsection{Numerical Validation} \label{sec:numerical}
To validate the conclusions in theoretical analyses, we conduct numerical experiments to study the properties of loss landscape of $l_0$ adversarial training and compare it with the $l_\infty$, $l_2$ and $l_1$ cases.

\begin{figure}[t]
    \centering
    \small
     \subfigure[Eigenvalues of $\nabla_{\vtheta}^2\mathcal{L}_\epsilon^{(0)}$]{\includegraphics[width=0.31\textwidth]{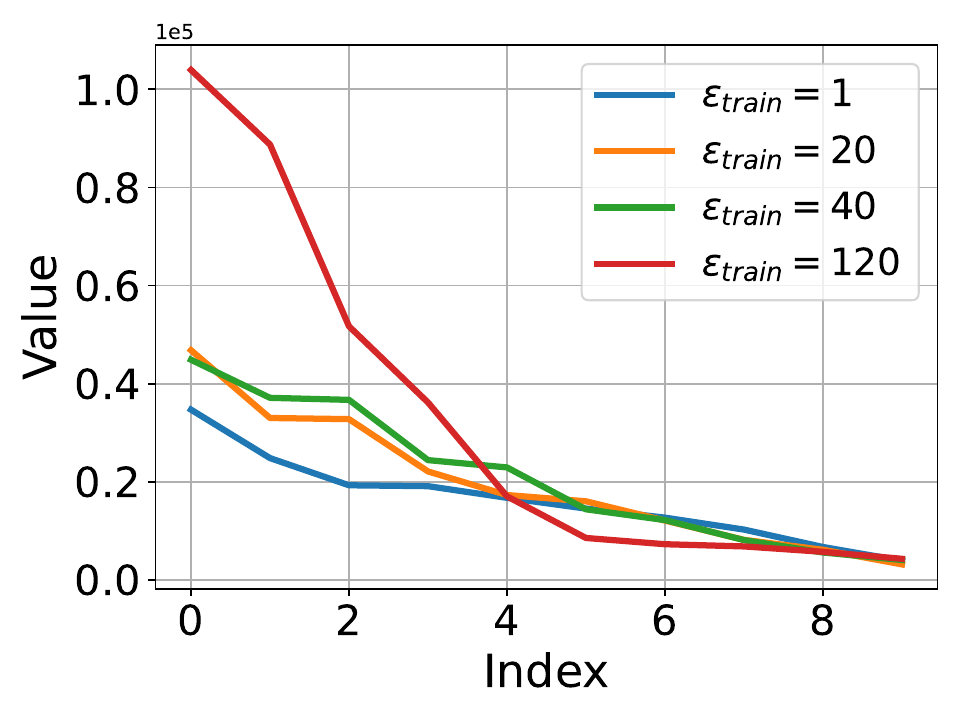}}
    \subfigure[Eigenvalues of $\nabla_{\vtheta}^2\mathcal{L}_\epsilon^{(p)}$]{\includegraphics[width=0.3\textwidth]{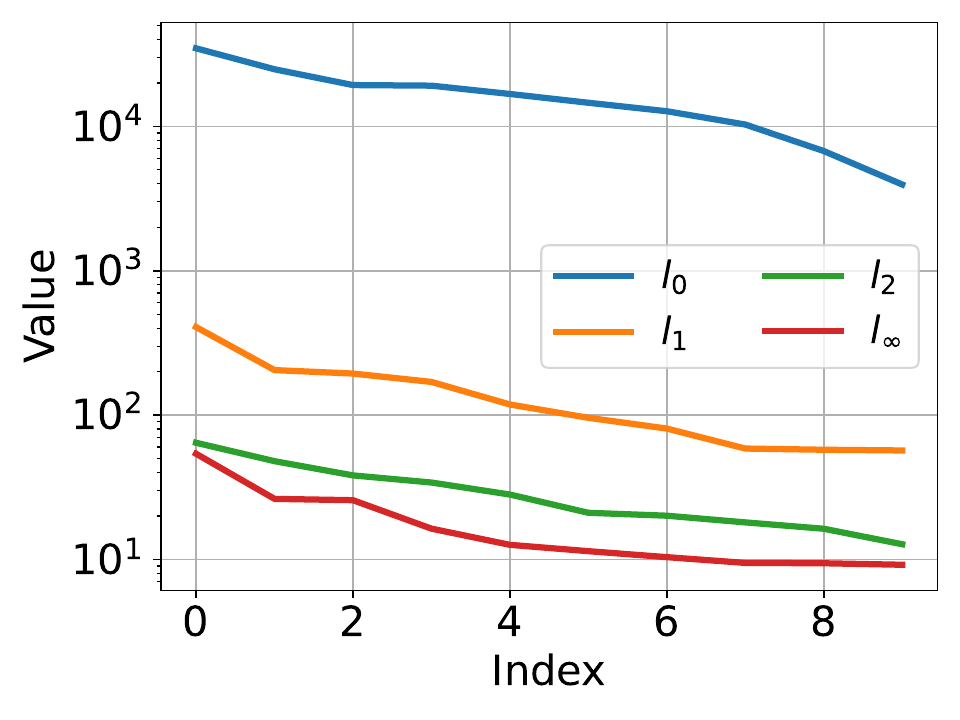}}
    \subfigure[$\mathcal{L}_{\epsilon}^{(0)},~\epsilon_{train}=1$]{\includegraphics[width=0.32\textwidth]{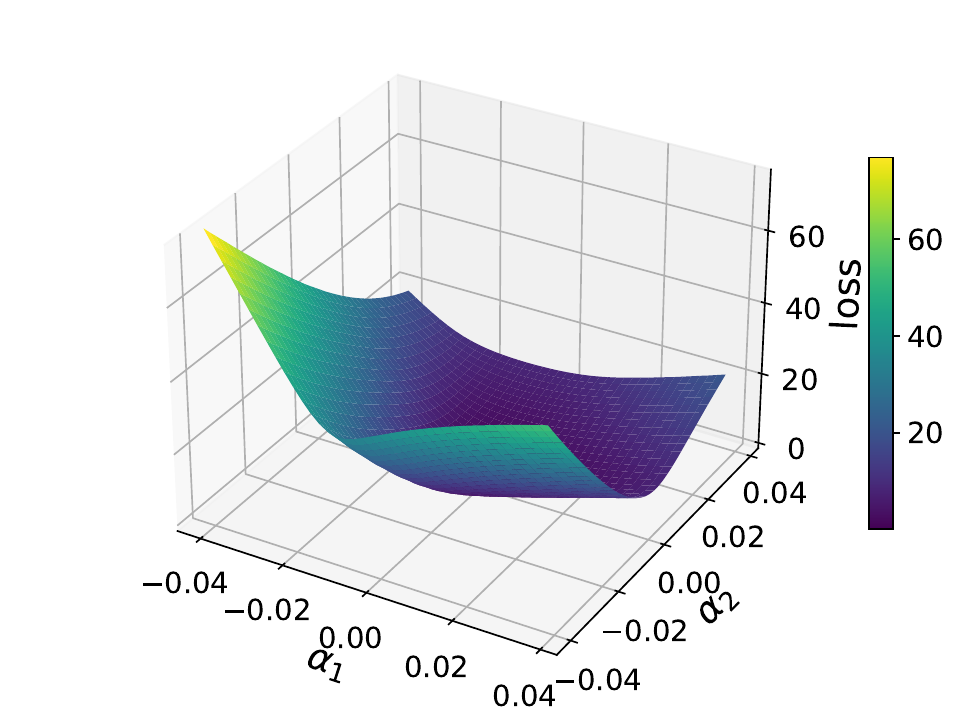}} \\ 
    \subfigure[$\mathcal{L}_{\epsilon}^{(1)},~\epsilon_{train}=24$]{\includegraphics[width=0.32\textwidth]{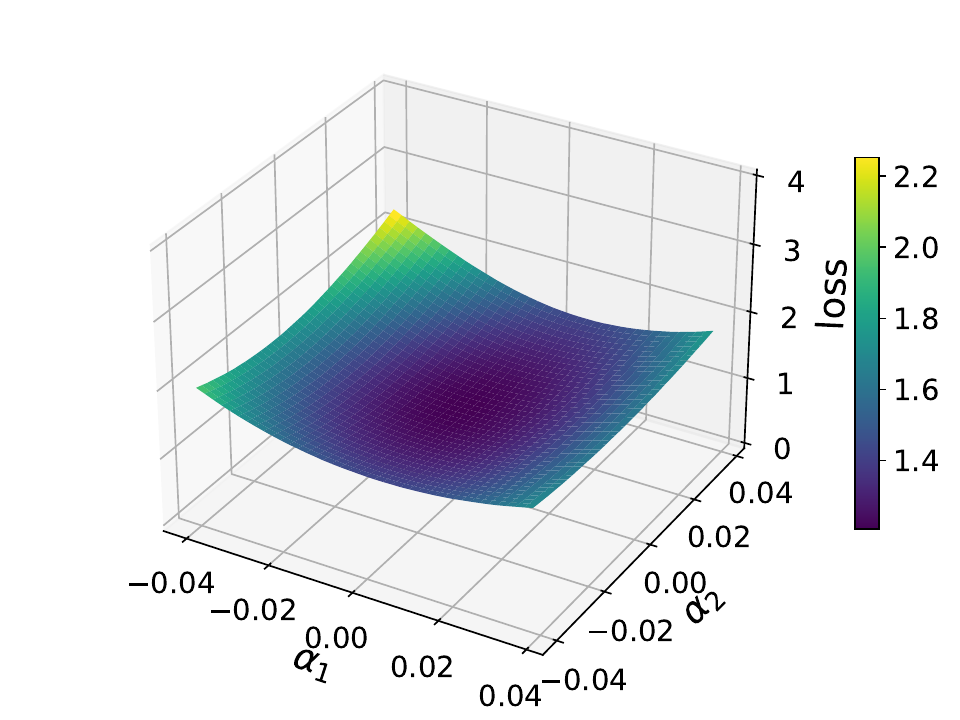}}
    \subfigure[$\mathcal{L}_{\epsilon}^{(2)},~\epsilon_{train}=0.5$]{\includegraphics[width=0.32\textwidth]{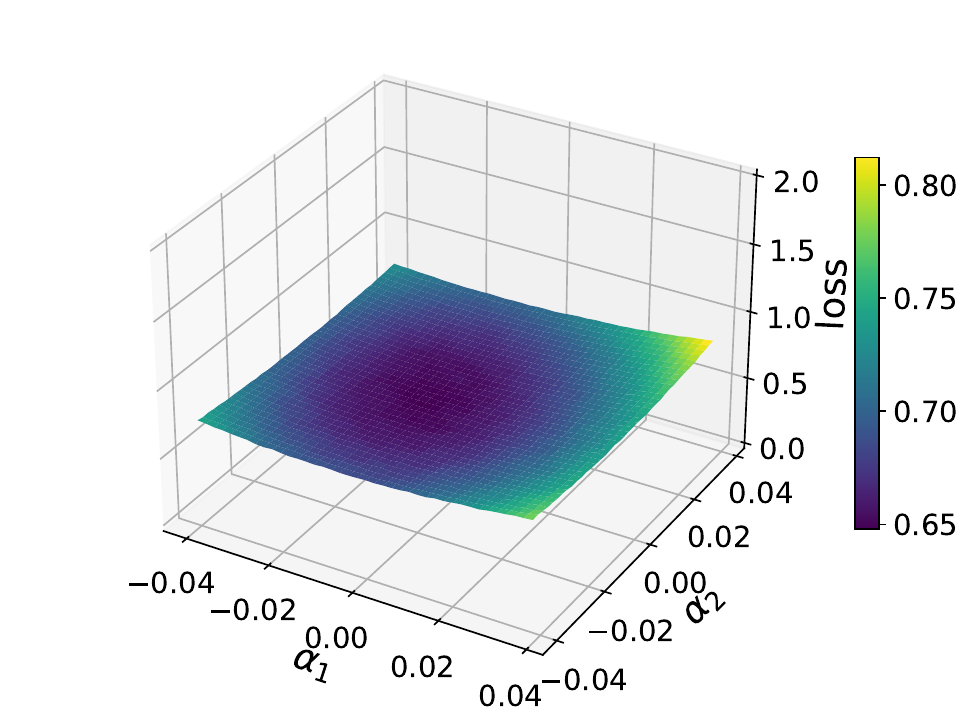}}
\subfigure[$\mathcal{L}_{\epsilon}^{(\infty)},~\epsilon_{train}=8/255$]{\includegraphics[width=0.32\textwidth]{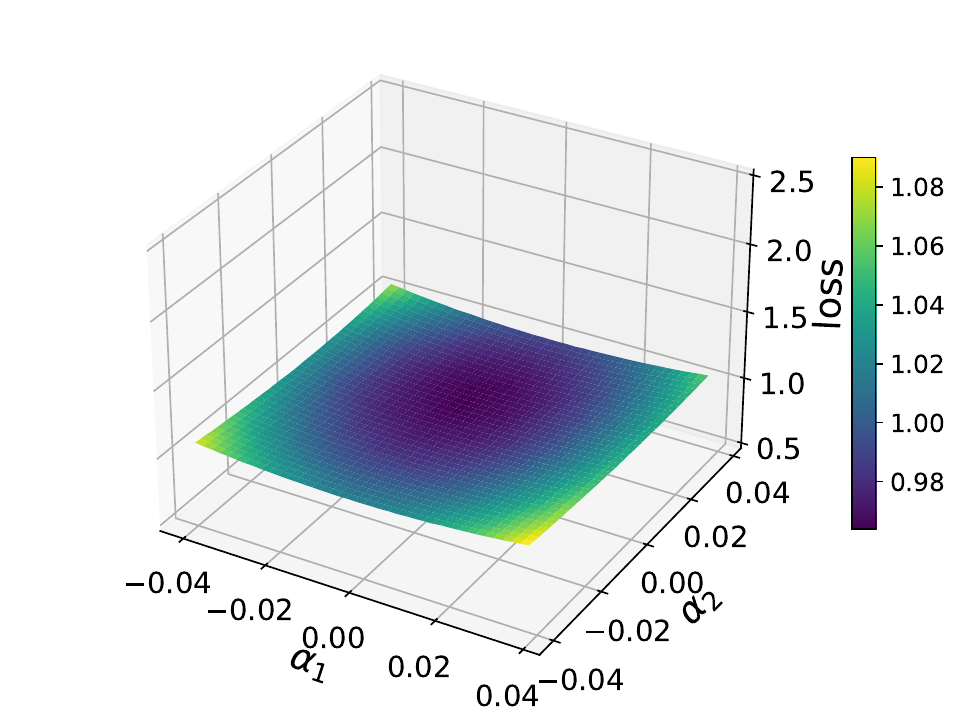}}
    \caption{Smoothness of adversarial loss objective functions. All losses are calculated on the training set of CIFAR-10 by PreactResNet-18. The $l_0$, $l_1$, $l_2$ and $l_\infty$ models are obtained by $1$-step sAT \citep{zhong2024efficient}, Fast-EG-$l_1$ \citep{jiang2023towards}, $1$-step PGD \citep{rice2020overfitting} and GradAlign \citep{andriushchenko2020square}, respectively. \textbf{(a)} Top $10$ eigenvalues of $\nabla_{\vtheta}^2\mathcal{L}_{\epsilon}^{(0)}(\vx, \vtheta)$ with different values of $\epsilon_{train}$ in the $l_0$ case. \textbf{(b)} Top $10$ eigenvalues of $\nabla_{\vtheta}^2\mathcal{L}_{\epsilon}^{(p)}(\vx, \vtheta)$ under different choices of $p$, including $0$, $1$, $2$ and $\infty$. The y-axis is shown on a log scale. \textbf{(c) - (f)} The loss landscape of $\mathcal{L}_{\epsilon}(\vx, \vtheta+\alpha_1\vv_1+\alpha_2\vv_2)$ where $\vv_1$ and $\vv_2$ are the eigenvectors associated with the top $2$ eigenvalues of $\nabla_{\vtheta}^2\mathcal{L}_{\epsilon}(\vx, \vtheta)$, respectively. \textbf{(c)} $l_0$ case, $\epsilon_{train} = 1$. \textbf{(d)} $l_1$ case, $\epsilon_{train}=24$. \textbf{(e)} $l_2$ case, $\epsilon_{train}=0.5$. \textbf{(f)} $l_\infty$ case, $\epsilon_{train}=8/255$.}
    \label{fig:nonsmooth_loss}
    \vspace{-2em}
\end{figure}

We first study the curvature in the neighborhood of model parameters, which reflects the second-order smoothness of the loss function and is dominated by top eigenvalues of Hessian matrix $\nabla_{\vtheta}^2\mathcal{L}_\epsilon(\vx,\vtheta)$.
Numerically, we employ the power method~\citep{yao2018hessian, liu2020loss, zhong2023towards} to iteratively estimate the eigenvalues and the corresponding eigenvectors of Hessian matrices. 
We plot the top-$10$ eigenvalues of the Hessian matrices $\nabla_{\vtheta}^2\mathcal{L}_{\epsilon}(\vx, \vtheta)$ under different $\epsilon$ in $l_0$ cases in Figure~\ref{fig:nonsmooth_loss} (a).
In addition, we compare the Hessian spectrum in the $l_0$ case with $l_\infty$, $l_2$ and $l_1$ cases in Figure~\ref{fig:nonsmooth_loss} (b).
Our results in Figure~\ref{fig:nonsmooth_loss} (a) demonstrate that eigenvalues of Hessian matrices in $l_0$ cases increase as $\epsilon$ grows, indicating a higher degree of non-smoothness for a larger $\epsilon$.
Moreover, Figure~\ref{fig:nonsmooth_loss} (b) indicates that the adversarial loss landscape in the $l_0$ case is more craggy than its $l_\infty$, $l_2$ and $l_1$ counterparts, even when we set $\epsilon = 1$, i.e., perturbing only a single pixel.
These observations corroborate that $l_0$ adversarial training exhibits worse second-order smoothness than other cases, causing challenges in optimization.

To study the first-order smoothness, we visualize the loss landscape of different settings in Figures \ref{fig:nonsmooth_loss} (c)-(f), which demonstrate that the loss in the $l_0$ case abruptly increases even with subtle changes in the model parameters. This further suggests the non-smooth nature of the $l_0$ adversarial loss landscape. More loss landscape visualizations of $l_0$ adversarial training with different $\epsilon$ are provided in Appendix \ref{sec:ap_loss_landscape}. The observations are consistent with that in Figure \ref{fig:nonsmooth_loss}.
Accordingly, we confirm that the loss landscape of  $l_0$ adversarial loss function is more craggy than other cases from both theoretical and empirical perspectives.
In addition, among the cases studied in Figure~\ref{sec:nonsmooth_loss}, the $l_0$ cases are the only ones suffering from CO. 

On the other side, we show in Figure~\ref{fig:traj} of Appendix \ref{sec:es} that successful attempts to obtain robust models against $l_0$ bounded perturbation also include elements that help improve the smoothness of the loss landscape. $20$-step sAT in \cite{zhong2024efficient} uses an early stopping (ES) strategy to avoid CO and to achieve competitive performance.
Specifically, ES interrupts the attack iteration once the current perturbed input is misclassified.
ES is shown to circumvent the potential for excessive gradient magnitude while maintaining the efficacy of the generated perturbations.
This finding further highlights a strong correlation between CO and the craggy nature of the loss landscape in $l_0$ adversarial training.

In summary, our results suggest that \textbf{the $l_0$ adversarial training exhibits a more craggy loss landscape than other cases, which shows a strong correlation with CO}. Additionally, despite the non-trivial performance of $20$-step sAT with ES, its performance still exhibits considerable fluctuation and can be further improved, underscoring the need for a smoother loss function.
In the next subsection, we will offer our recipe to address the CO issue in fast $l_0$ adversarial training.

\subsection{Recipe: Soft Label and Trade-off Loss Function Smooth Loss Landscape} \label{sec:soft_label}
Notice that $A_{\vtheta}$ in Theorem \ref{theorem_lip} can be regarded as a function of the label $\vy$. Let $\vy_h\in\{0,1\}^K$ and $\vy_s\in(0,1)^K$ denote the hard and soft label, respectively. That is to say,  $\vy_h$ is a one-hot vector, while $\vy_s$ is a dense vector in a simplex. We find that \textbf{soft label $\vy_s$ leads to a reduced first-order Lipschitz constant}, thereby enhancing the Lipschitz continuity of the adversarial loss function. Detailed theoretical analysis is deferred to Proposition \ref{propo:soft_label} of Appendix \ref{sec:method_propo}. 

However, as indicated by Theorem \ref{theorem_lip_g}, the second-order Lipschitz constant remains unaffected by variations in $\vy$. Considering the poor performance on clean inputs when CO happens, we introduce a trade-off loss function $\gL_{\epsilon, \alpha}$ which interpolates between clean and adversarial losses: $\gL_{\epsilon, \alpha}(\vx, \vtheta) = (1 - \alpha) \gL(\vx, \vtheta) + \alpha \max_{\vdelta \in \gS_\epsilon(\vx)} \gL(\vx + \vdelta, \vtheta)$
where $\alpha\in[0,1]$ is the interpolation factor. 
We find that \textbf{trade-off loss function can enhance the second-order smoothness of adversarial loss function}. Detailed theoretical analysis is deferred to Proposition \ref{propo:tradeoff_loss} of Appendix \ref{sec:method_propo}. 

In implementation, we can stabilize and improve the performance of fast adversarial training against $l_0$ bounded perturbations by combining soft labels and trade-off loss function.
In addition, several available techniques, such as self-adaptive training (SAT)~\citep{huang2020self} and TRADES~\citep{Zhang2019TheoreticallyPT}, can be considered variations of soft labels and trade-off loss function.
In Appendix \ref{sec:app_alg}, we provide the pseudo-codes of both SAT and TRADES and the formulation of their combination as a reference. 
It should be highlighted that the rationale for using soft labels and trade-off loss function is different for the $l_0$ case. Although they are widely leveraged to address robust overfitting to boost performance in $l_2$ and $l_\infty$ cases, \textbf{smoothing the loss function is essential to address the CO issue in the $l_0$ case}.

\section{Experiments} \label{sec:exp}
In this section, we perform extensive experiments to investigate various approaches that can stabilize and improve the performance of fast $l_0$ adversarial training. 
Then, we compare the performance of $1$-step adversarial training with the multi-step counterparts. 


\subsection{Approaches to Improving $1$-Step $l_0$ Adversarial Training} \label{sec:investigate}
\begin{table}[!h]
\vspace{-0.5em}
    \centering
    \caption{Comparison of different approaches and their combinations in robust accuracy (\%) by sAA. The target sparsity level $\epsilon=20$. We compare PreAct ResNet-18 \citep{he2016deep} models trained on CIFAR-10 \citep{krizhevsky2009learning}. Note that \textbf{S} and \textbf{N} denote SAT and N-FGSM, respectively. The \textit{italic numbers} indicate catastrophic overfitting (CO) happens.}
    \footnotesize
    \begin{tabular}{l|c c c c}
    \toprule[1.5pt]
    Method & sAT & Trade-off & sTRADES (T) & sTRADES (F)\\
    \midrule[1pt]
    $1$-step & \textit{0.0}& \textit{2.6}&31.0& 55.4\\
    + N & \textit{0.3}& \textit{17.5}&46.9& 55.9\\
    + S & 29.3& 30.3&61.4 & 59.4\\
    + S\&N & \text{43.8}&\text{39.2} &\textbf{63.0} & \text{62.6}\\
    \bottomrule[1.5pt]
    \end{tabular}
    \label{tab:inv1}
    \vspace{-0.5em}
\end{table}

We begin our analysis by evaluating the effectiveness of different approaches and their combinations, focusing on those that incorporate either soft labels or trade-off loss functions. Additionally, we explore a data augmentation technique N-FGSM \citep{Jorge2022MakeSN}, which may mitigate the sub-optimality of perturbation location by randomly perturbing more pixels.
Our findings, summarized in Table~\ref{tab:inv1}, are all based on $1$-step adversarial training. The robust accuracy is measured using the sparse-AutoAttack (sAA) method, with $\epsilon$ set to $20$. 


In Table~\ref{tab:inv1}, we investigate the following approaches and their combinations:
\textbf{(1) sAT}: adversarial training against $1$-step sPGD \citep{zhong2024efficient}. \textbf{(2) Tradeoff}: $1$-step adversarial training with the trade-off loss function as discussed in Section~\ref{sec:soft_label}. \textbf{(3) sTRADES}: the $1$-step sTRADES \citep{zhong2024efficient}. As discussed in Appendix~\ref{sec:app_alg}, it incorporates both soft label and trade-off loss function. We include two variants of sTRADES for comparison: \textbf{sTRADES (T)} is the training mode where we only use the loss objective function of TRADES for training but still use the cross-entropy loss to generate adversarial examples; \textbf{sTRADES (F)} is the full mode where we use the KL divergence loss function for generating adversarial perturbations.
Compared with $1$-step sAT, sTRADES (T) introduces $25\%$ overhead while sTRADES (F) introduces $50\%$ overhead.
\textbf{(4) SAT (S)}: self-adaptive training \citep{huang2020self}. As discussed in Appendix~\ref{sec:app_alg}, it introduces soft labels based on the moving average of the historical predictions and uses adaptive weights for training instances of different prediction confidence. \textbf{(5) N-FGSM (N)}: data augmentation technique by adding random noise to the training data. It is proven effective in $1$-step adversarial training~\citep{Jorge2022MakeSN}.
Among these approaches, SAT (S) and sTRADES introduce soft labels; trade-off and sTRADES use trade-off loss functions to smooth the loss objective function.
The comparison with other smoothing approaches is undertaken in Appendix \ref{sec:compare_baseline}.
The implementation details are deferred to Appendix \ref{sec:imple}. 

The results in Table~\ref{tab:inv1} indicate that using trade-off loss function alone still suffers from CO.
In contrast, combining it with soft label, either by SAT or sTRADES, can eliminate CO and achieve notable robust accuracy.
Furthermore, sTRADES (F) outperforms sTRADES (T) by a substantial margin of $24.4\%$, which can be attributed to the generation of higher-quality adversarial examples for training by sTRADES (F).
Finally, both SAT and N-FGSM can enhance the performance of all approaches, demonstrating their effectiveness.

\begin{table}[t]
\setlength\tabcolsep{5pt}
    \centering
    \caption{Robust accuracy (\%) against sparse attacks. \textbf{(a)} PreActResNet-18 trained on \textbf{CIFAR-10}, where the attack sparsity level $\epsilon=20$. \textbf{(b)} ResNet-34 trained on \textbf{ImageNet-100}, where $\epsilon=200$. CornerSearch (CS) is not evaluated due to its high computational complexity. Cost times are recorded on one NVIDIA RTX 6000 Ada. } \label{tab:res}
    \footnotesize
    \subtable[\textbf{CIFAR-10, $\epsilon=20$}]{
    \tabcolsep=0.25em
    \resizebox*{0.5\textwidth}{!}{
    \begin{tabular}{l|c|c|c c |c c c c| c}
        \toprule[1.5pt]
          \multirow{2}{*}{Model}& \multirow{2}{*}{\makecell{Time\\Cost}} & \multirow{2}{*}{Clean} & \multicolumn{2}{c|}{Black} & \multicolumn{4}{c|}{White} & \multirow{2}{*}{sAA} \\
          & & & CS & RS  & SAIF & $\sigma$-zero & sPGD$_{p}$& sPGD$_{u}$& \\
         \midrule[1pt]
         \multicolumn{9}{l}{\textit{Multi-step}}\\
         \midrule[0.5pt]
         sAT & 5.3 h &84.5 & 52.1 & 36.2& 76.6 & 79.8 &  75.9  & 75.3  &36.2\\
         ~~\textbf{+S\&N} & 5.5 h &80.8 &64.1 & 61.1& 76.1& 78.7 & 76.8&75.1 &61.0 \\
         sTRADES &5.5 h & 89.8 & 69.9 & 61.8& 84.9 & 85.9 & 84.6 & 81.7 & 61.7\\
         ~~\textbf{+S\&N} &5.4 h & 82.2&66.3 & 66.1& 77.1 & 77.0&74.1 &72.2 & \textbf{65.5}\\
         \midrule[1pt]
         \multicolumn{9}{l}{\textit{One-step}}\\
          \midrule[0.5pt]
          \textbf{Fast-LS-$l_0$} &0.8 h & 82.5&69.3 &65.4 & 75.7& 73.7 & 67.2& 67.7& \textbf{63.0}\\
         \bottomrule[1.5pt]
    \end{tabular}
    }
    \label{tab:cifar10}}
    \subtable[\textbf{ImageNet-100, $\epsilon=200$}]{
    \tabcolsep=0.25em
    \resizebox*{0.47\textwidth}{!}{
    \begin{tabular}{l|c|c|c|c c c c| c}
        \toprule[1.5pt]
          \multirow{2}{*}{Model}& \multirow{2}{*}{\makecell{Time\\Cost}} & \multirow{2}{*}{Clean} & Black & \multicolumn{4}{c|}{White} & \multirow{2}{*}{sAA} \\
          & & & RS  & SAIF & $\sigma$-zero & sPGD$_{p}$& sPGD$_{u}$& \\
         \midrule[1pt]
         \multicolumn{8}{l}{\textit{Multi-step}}\\
         \midrule[0.5pt]
         sAT &325 h &  86.2&61.4 & 69.0& 78.6& 78.0& 77.8& 61.2 \\
         ~~{\textbf{+S\&N}} & 336 h & 83.0& 75.0& 76.4& 80.8& 78.8& 79.2& 74.8\\
        sTRADES & 359 h & 84.8 & 76.0 & 77.4 & 81.6& 80.6 & 81.4 & 75.8 \\
         ~~{\textbf{+S\&N}} & 360 h & 82.4& 78.2&79.2 & 80.0&78.2&79.8 & \textbf{77.8}\\
         \midrule[1pt]
         \multicolumn{8}{l}{\textit{One-step}}\\
          \midrule[0.5pt]
          \textbf{Fast-LS-$l_0$} & 44 h&82.4 & 76.8& 75.4& 74.0& 74.6 &74.6 & \textbf{72.4}\\
         \bottomrule[1.5pt]
    \end{tabular}
    }
    \label{tab:imagenet}}
    \vspace{-1em}
\end{table}
Note that the results presented in Table~\ref{tab:inv1} are obtained using sAA, which is known for generating the strongest attacks in terms of $l_0$ perturbations. 
Our findings demonstrate that incorporating soft labels and trade-off loss function yields substantial performance improvements in $1$-step $l_0$ adversarial training. Among various combinations of methods explored, the model trained with sTRADES (T) with SAT and N-FGSM achieves the highest robust accuracy against sAA, reaching an impressive $63.0\%$.
This establishes the state-of-the-art performance in fast $l_0$ adversarial training methods. 
For convenience, we name this combination (i.e., $1$-step sTRADES + SAT + N-FGSM) \textbf{Fast}-\textbf{L}oss \textbf{S}moothing-$l_0$ (\textbf{Fast-LS-$l_0$}). Its pseudo-code is given in Algorithm \ref{alg:ours} of Appendix \ref{sec:app_alg}. 

Furthermore, we validate the robustness of our method against feature-space attacks in Appendix \ref{sec:ap_feature}, validate the efficacy of our method for $l_1$, $l_2$ and $l_\infty$ cases in Appendix \ref{sec:ap_lp}, validate the efficacy of our method on different networks in Appendix \ref{sec:ap_network}, visualize the loss landscape when using our method in Appendix~\ref{sec:ap_loss_landscape_soft} to demonstrate its improved smoothness, and conduct ablation studies for analysis in Appendix~\ref{sec:ab}.
Our extensive results demonstrate the broad effectiveness of smoothing loss landscape in fast adversarial training. Nevertheless, it turns out essential to address CO and has the largest performance improvement in the $l_0$ cases compared with other $l_p$ norms ($p \geq 1$), because the loss landscape is much more craggy in the $l_0$ cases, as analyzed in Section~\ref{sec:method}.

\subsection{Comparison with Multi-Step Adversarial Training} \label{sec:compare_multi}
In this section, we compare $1$-step adversarial training with its multi-step counterpart.
For multi-step adversarial training, we follow the settings in~\cite{zhong2024efficient} and use $20$-step sPGD based on cross-entropy to generate adversarial perturbations in sAT and sTRADES.
Similar to Table~\ref{tab:inv1}, we incorporate SAT and N-FGSM into multi-step adversarial training as well.
For $1$-step adversarial training, we focus on the best configuration in Table~\ref{tab:inv1}, i.e., Fast-LS-$l_0$.

We conduct extensive experiments on various datasets.
The results on CIFAR-10 and ImageNet-100~\citep{imagenet_cvpr09} are demonstrated in Table~\ref{tab:res}.
More results on CIFAR-100~\citep{krizhevsky2009learning} and GTSRB~\citep{stallkamp2012man} are in Table \ref{tab:cifar100} and~\ref{tab:gtsrb} of Appendix~\ref{sec:more_result}, respectively.
In addition to the performance under sAA, we report the robust accuracy under various black-box and white box attacks, including CornerSearch (CS) \citep{croce2019sparse}, Sparse-RS (RS) \citep{croce2022sparse}, SAIF \citep{imtiaz2022saif}, $\sigma$-zero\footnote{We replace the regularization term in $\sigma$-zero with $\widehat{l}_0=\sum_{i=1}^{hw}\frac{\sum_{j=1}^{c}x_{i,j}^2}{\sum_{j=1}^{c}x_{i,j}^2+\sigma}$ to generate perturbations in pixel space, where $h$, $w$ and $c$ are the height, width and channel of images, and $x_{i,j}$ represents the perturbation on the $i$-th channel of a particular pixel.} \citep{cina2025sigma} and two versions of sPGD~\citep{zhong2024efficient}.
Moreover, we report the clean accuracy and running time for reference.
Finally, to more comprehensively validate the effectiveness of our results, we report the standard deviation of the performance in Table~\ref{tab:std} of Appendix~\ref{sec:std}.

The results in Table~\ref{tab:res},~\ref{tab:cifar100} and~\ref{tab:gtsrb} suggest that our method can improve the performance of both $1$-step and multi-step adversarial training.
In addition, N-FGSM, originally designed for $1$-step adversarial training, also contributes to performance improvements in the multi-step scenario.
Furthermore, these techniques can greatly narrow down the performance gaps between $1$-step and multi-step adversarial training, making fast adversarial training more feasible and competitive.
With the assistance of SAT and N-FGSM, \textbf{our Fast-LS-$l_0$ can achieve a performance that is merely $2.5\%$ lower than that of the $20$-step sTRADES while requiring less than $1/6$ of the total running time}.

\section{Conclusion} \label{sec:conclu}
We highlight that the catastrophic overfitting (CO) in $l_0$ adversarial training is caused by sub-optimal perturbation locations. Our analyses show that the loss landscape of $l_0$ adversarial training is more craggy than other cases and correlates with CO. To address this, we utilize soft labels and a trade-off loss function to smooth the adversarial loss. Extensive experiments show our method effectively mitigates CO and narrows the performance gap between $1$-step and multi-step $l_0$ adversarial training, achieving state-of-the-art robustness against sparse attacks.

\section*{Broader Impacts and Limitations} \label{sec:bil}
Since our method is evaluated on benchmarks, we do not see it has an obvious negative societal impact. Despite the effectiveness of soft label and trade-off loss function, we cannot guarantee they are the optimal method to smooth the loss landscape of $l_0$ adversarial training. We leave developing more effective methods as future work.

\begin{ack}
This work is supported by National Natural Science Foundation of China (NSFC Project No. 62306250) and City University of Hong Kong (CityU Project No. 9610614).
\end{ack}

\bibliography{icml2025_conference}
\bibliographystyle{unsrt}

\newpage
\appendix
\section{Algorithm Details} \label{sec:app_alg}
\begin{algorithm}
\caption{Self-Adaptive Training (SAT) \citep{huang2020self}}
\label{alg:sat}
\begin{algorithmic}[1]
    \STATE {\bfseries Input:} Data: $\{(\vx_i,\vy_i)\}_n$; Initial target $\{\vt_i\}_n=\{\vy_i\}_n$; Batch size: $m$; Classifier: $f$; Enabling epoch: $E_s$; Momentum factor: $\alpha$
    \REPEAT
    \STATE Fetch mini-batch data $\{(\vx_i,\vt_i)\}_m$ at current epoch $e$
    \FOR{$i=1,...,m$}
        \STATE $\vp_i = \mathrm{softmax}(f(\vx_i))$
        \IF{$e>E_s$}
            \STATE $\vt_i=\alpha\times \vt_i + (1-\alpha)\times\vp_i$
        \ENDIF
        \STATE $w_i = \max_j \vt_{i,j}$
    \ENDFOR
    \STATE Calculate the loss $\gL_{SAT}=-\frac{1}{\sum_iw_i}\sum_iw_i\sum_j\vt_{i,j}\log \vp_{i,j}$
    \STATE Update the parameters of $f$ on $\gL_{SAT}$
    
    \UNTIL{end of training}
\end{algorithmic}
\end{algorithm}   
\begin{algorithm}
\caption{TRADES \citep{Zhang2019TheoreticallyPT}}
\label{alg:trades}
\begin{algorithmic}[1]
    \STATE {\bfseries Input:} Data: $(\vx,\vy)$; Classifier: $f$; Balancing factor: $\beta$; TRADES mode: $mode$; Sparse level: $\epsilon$
    \IF{$mode = F$}
    \STATE Generate adversarial sample $\widetilde{\vx} = \max_{(\widetilde{\vx}-\vx)\in\gS_\epsilon(\vx)} \mathrm{KL}(f(\vx), f(\widetilde{\vx}))$ 
    \ELSIF{$mode = T$}
    \STATE Generate adversarial sample $\widetilde{\vx} = \max_{(\widetilde{\vx}-\vx)\in\gS_\epsilon(\vx)} \mathrm{CE}(f(\widetilde{\vx}), \vy)$ 
    \ENDIF
    \STATE Calculate the loss $\gL_{TRADES}=\mathrm{CE}(f(\vx), \vy) + \beta\cdot \mathrm{KL}(f(\vx), f(\widetilde{\vx}))$
    \STATE Update the parameters of $f$ on $\gL_{TRADES}$
\end{algorithmic}
\end{algorithm} 

The pseudo-codes of SAT \citep{huang2020self} and TRADES \citep{Zhang2019TheoreticallyPT} are provided in Algorithm \ref{alg:sat} and \ref{alg:trades}, respectively. For SAT, the moving average of the previous predictions $\{\vt_i\}^n$ can be regarded as the soft labels. For TRADES, $f(\vx)$ can be seen as the soft label of $f(\widetilde{\vx})$, and the combination of cross-entropy and KL divergence is also a trade-off loss function. Note that when combining SAT and TRADES, the loss $\gL_{S+T}$ for a mini-batch data $\{(\vx_i,\vy_i)\}_{m}$ can be written as:
\begin{equation} \label{eq:s+t}
    \gL_{S+T} = -\frac{1}{\sum_iw_i}\sum_iw_i\cdot\mathrm{CE}(f(\vx_i), \vt_i) + \frac{\beta}{m} \sum_i\mathrm{KL}(f(\vx_i), f(\widetilde{\vx}_i))
\end{equation}
In addition, we provide the pseudo-code of the proposed Fast-LS-$l_0$, which incorporates SAT, TRADES and N-FGSM, in Algorithm \ref{alg:ours}.

\begin{algorithm}
\caption{Fast-LS-$l_0$}
\label{alg:ours}
\begin{algorithmic}[1]
    \STATE {\bfseries Input:} Data: $\{(\vx_i,\vy_i)\}^n$; Initial target $\{\vt_i\}^n=\{\vy_i\}^n$; Batch size: $m$; Classifier: $f$; Enabling epoch: $E_s$; Momentum factor: $\alpha$; Balancing factor: $\beta$; TRADES mode: $mode$; Sparse level: $\epsilon$
    \REPEAT
    \STATE Fetch mini-batch data $\{(\vx_i,\vt_i)\}_m$ at current epoch $e$
    \FOR{$i=1,...,m$}
        \STATE $\veta_i \sim \gS_{2\epsilon}(\vx_i)$ 
        \STATE $\vx_i = \vx_i+\veta_i$\quad\quad\quad\quad\quad \textit{//~Augment sample with additive noise}
        \IF{$mode = F$}
        \STATE $\widetilde{\vx}_i = \max_{(\widetilde{\vx}_i-\vx_i)\in\gS_\epsilon(\vx_i)} \mathrm{KL}(f(\vx_i), f(\widetilde{\vx}_i))$ 
        \ELSIF{$mode = T$}
        \STATE $\widetilde{\vx}_i = \max_{(\widetilde{\vx}_i-\vx_i)\in\gS_\epsilon(\vx_i)} \mathrm{CE}(f(\widetilde{\vx}_i), \vt_i)$ 
        \ENDIF
        \STATE $\vp_i = \mathrm{softmax}(f(\vx_i))$
        \IF{$e>E_s$}
            \STATE $\vt_i=\alpha\times \vt_i + (1-\alpha)\times\vp_i$
        \ENDIF
        \STATE $w_i = \max_j \vt_{i,j}$
    \ENDFOR
    \STATE Calculate $\gL_{S+T}$ in Eq. (\ref{eq:s+t})
    \STATE Update the parameters of $f$ on $\gL_{S+T}$
    \UNTIL{end of training}
\end{algorithmic}
\end{algorithm} 
\section{Proofs} \label{sec:proofs}

\subsection{Proof of Theorem~\ref{theorem_lip}} \label{app:proof_theorem_lip}

\begin{proof}
Based on the definition of $\vdelta_1$ and $\vdelta_2$, we have $\gL_\epsilon(\vx, \vtheta_1) = \gL(\vx + \vdelta_1, \vtheta_1)$ and $\gL_\epsilon(\vx, \vtheta_2) = \gL(\vx + \vdelta_2, \vtheta_2)$.
In this regard, we have:
\begin{equation}\label{ineq:lip_1}
\begin{aligned}
\|\gL_\epsilon(\vx, \vtheta_1) - \gL_\epsilon(\vx, \vtheta_2)\| = \|\mathcal{L}(\vx+\vdelta_1, \vtheta_1) - \mathcal{L}(\vx+\vdelta_2, \vtheta_2)\|
\end{aligned}
\end{equation}

When $\mathcal{L}(\vx+\vdelta_1, \vtheta_1)\geq \mathcal{L}(\vx+\vdelta_2, \vtheta_2)$ we have
\begin{equation}\label{ineq:lip_2}
\begin{aligned}
    &\|\mathcal{L}(\vx+\vdelta_1, \vtheta_1) - \mathcal{L}(\vx+\vdelta_2, \vtheta_2)\|\\
    = &\|\mathcal{L}(\vx+\vdelta_1, \vtheta_1) - \mathcal{L}(\vx+\vdelta_1, \vtheta_2) + \mathcal{L}(\vx+\vdelta_1, \vtheta_2) - \mathcal{L}(\vx+\vdelta_2, \vtheta_2)\|\\
    \leq &\|\mathcal{L}(\vx+\vdelta_1, \vtheta_1) - \mathcal{L}(\vx+\vdelta_1, \vtheta_2)\|
\end{aligned}
\end{equation}

The inequality above is derived from the optimality of $\delta_2$, which indicates $\mathcal{L}(\vx+\vdelta_1, \vtheta_2) - \mathcal{L}(\vx+\vdelta_2, \vtheta_2) \leq 0$ and the assumption $\mathcal{L}(\vx+\vdelta_1, \vtheta_1)\geq \mathcal{L}(\vx+\vdelta_2, \vtheta_2)$.

Similarly, when $\mathcal{L}(\vx+\vdelta_1, \vtheta_1)\leq \mathcal{L}(\vx+\vdelta_2, \vtheta_2)$ we have
\begin{equation} \label{ineq:lip_2_2}
\begin{aligned}
    &\|\mathcal{L}(\vx+\vdelta_1, \vtheta_1) - \mathcal{L}(\vx+\vdelta_2, \vtheta_2)\|\\
    = &\|\mathcal{L}(\vx+\vdelta_1, \vtheta_1) - \mathcal{L}(\vx+\vdelta_2, \vtheta_1) + \mathcal{L}(\vx+\vdelta_2, \vtheta_1) - \mathcal{L}(\vx+\vdelta_2, \vtheta_2)\|\\
    \leq &\|\mathcal{L}(\vx+\vdelta_2, \vtheta_1) - \mathcal{L}(\vx+\vdelta_2, \vtheta_2)\|
\end{aligned}
\end{equation}

Without the loss of generality, we further bound $\|\gL_\epsilon(\vx, \vtheta_1) - \gL_\epsilon(\vx, \vtheta_2)\|$ based on~(\ref{ineq:lip_2}). The derivation can be straightforwardly extended to~(\ref{ineq:lip_2_2}) by replacing $\vdelta_1$ with $\vdelta_2$.

Based on the formulation of $\gL$ in (\ref{eq:loss}), $\|\gL_\epsilon(\vx, \vtheta_1) - \gL_\epsilon(\vx, \vtheta_2)\|$ can be further derived as follows:
\begin{equation} \label{ineq:lip_3}
\begin{aligned}
    \|\gL_\epsilon(\vx, \vtheta_1) - \gL_\epsilon(\vx, \vtheta_2)\|
    &\leq \left|\sum_{i\in\mathcal{S}_+} y_i \log \frac{h_i(\vx+\vdelta_1, \vtheta_2)}{h_i(\vx+\vdelta_1, \vtheta_1)}\right|\\
    &= \sum_{i\in\mathcal{S}_+} y_i \left| \log \frac{1 + \sum_{j \neq i} \exp(f_j(\vx + \vdelta_1, \vtheta_2) - f_i(\vx + \vdelta_1, \vtheta_2))}{1 + \sum_{j \neq i} \exp(f_j(\vx + \vdelta_1, \vtheta_1) - f_i(\vx + \vdelta_1, \vtheta_1)) } \right|
\end{aligned}
\end{equation}

where $\mathcal{S}_+=\{i~|~y_i\geq 0, h_i(\vx+\vdelta_1, \vtheta_2)>h_i(\vx+\vdelta_1, \vtheta_1)\}$.
Then, according to the mediant inequality, we have

\begin{equation} \label{ineq:lip_3_1}
\begin{aligned}
    & \left| \log \frac{1 + \sum_{j \neq i} \exp(f_j(\vx + \vdelta_1, \vtheta_2) - f_i(\vx + \vdelta_1, \vtheta_2))}{1 + \sum_{j \neq i} \exp(f_j(\vx + \vdelta_1, \vtheta_1) - f_i(\vx + \vdelta_1, \vtheta_1)) } \right| \\
    \leq & \left| \log \frac{\sum_{j \neq i} \exp(f_j(\vx + \vdelta_1, \vtheta_2) - f_i(\vx + \vdelta_1, \vtheta_2))}{\sum_{j \neq i} \exp(f_j(\vx + \vdelta_1, \vtheta_1) - f_i(\vx + \vdelta_1, \vtheta_1)) } \right| \\
    \leq & \max_k \left|\log \frac{\exp(f_k(\vx + \vdelta_1, \vtheta_2) - f_i(\vx + \vdelta_1, \vtheta_2))}{\exp(f_k(\vx + \vdelta_1, \vtheta_1) - f_i(\vx + \vdelta_1, \vtheta_1)) }\right| \\
    \leq & \max_k |f_k(\vx + \vdelta_1, \vtheta_2) - f_k(\vx + \vdelta_1, \vtheta_1)| + |f_i(\vx + \vdelta_1, \vtheta_2) - f_i(\vx + \vdelta_1, \vtheta_1)| \\
    \leq & 2L_\vtheta\|\vtheta_1 - \vtheta_2\|
\end{aligned}
\end{equation}

Note that the bound on the right of (\ref{ineq:lip_3_1}) is tight. The upper bound can be achieved asymptotically if the condition in (\ref{ineq:optim_cond}) and the Lipschitz bound in Assumption~\ref{assum_lip} are satisfied.
\begin{equation} \label{ineq:optim_cond}
\begin{aligned}
& \Big||f_k(\vx + \vdelta_1, \vtheta_2) - f_i(\vx + \vdelta_1, \vtheta_2)| - |f_k(\vx + \vdelta_1, \vtheta_1) - f_i(\vx + \vdelta_1, \vtheta_1)|\Big| \\
\gg & \max_{j \neq k} \Big||f_j(\vx + \vdelta_1, \vtheta_2) - f_i(\vx + \vdelta_1, \vtheta_2)| - |f_j(\vx + \vdelta_1, \vtheta_1) - f_i(\vx + \vdelta_1, \vtheta_1)|\Big|
\end{aligned}
\end{equation}

Combining (\ref{ineq:lip_1})-(\ref{ineq:lip_3_1}), we have
\begin{equation} \label{ineq:lip}
\begin{aligned}
    \|\mathcal{L}_\epsilon(\vx, \vtheta_1) - \mathcal{L}_\epsilon(\vx, \vtheta_2)\| &\leq A_{\vtheta}\|\vtheta_1 - \vtheta_2\|,
\end{aligned}
\end{equation} 
where $A_{\vtheta} =  2 \sum_{i\in\mathcal{S}_+}y_iL_\vtheta$.
\end{proof}

\subsection{Proof of Theorem~\ref{theorem_lip_g}} \label{app:proof_theorem_lip_g}
\begin{proof}
Given (\ref{eq:loss}), $\nabla_{\vtheta}\mathcal{L}$ is computed as

\begin{equation} \label{eq:g_loss}
\begin{aligned}
    \nabla_{\vtheta} \gL(\vx, \vtheta) &= - \sum_{i = 0}^{K - 1} y_i \left[ \nabla_\vtheta f_i(\vx, \vtheta) - \frac{\sum_j \exp(f_j(\vx, \vtheta)) \nabla_\vtheta f_j(\vx, \vtheta)}{\sum_{j} \exp(f_j(\vx, \vtheta))}\right] \\
    &= \frac{\sum_j \exp(f_j(\vx, \vtheta)) \nabla_\vtheta f_j(\vx, \vtheta)}{\sum_{j} \exp(f_j(\vx, \vtheta))} - \sum_{i = 0}^{K - 1} y_i \nabla_\vtheta f_i(\vx, \vtheta) \\
    &\defeq \sum_{j = 0}^{K - 1} h_j(\vx, \vtheta) \nabla_\vtheta f_j(\vx, \vtheta) - \sum_{i = 0}^{K - 1} y_i \nabla_\vtheta f_i(\vx, \vtheta) \\
\end{aligned}
\end{equation}

The second equality is based on the fact that $\{y_i\}_{i = 0}^{K - 1}$ is in a simplex.
To simplify the notation, the last equation is based on the definition that $\{h_j\}_{j = 0}^{K - 1}$ is the result of softmax function applied to $\{f_j\}_{j = 0}^{K - 1}$, i.e., $h_j(\vx, \vtheta) = \frac{\exp(f_j(\vx, \vtheta))}{\sum_k \exp(f_k(\vx, \vtheta))}$.
Therefore, we have $\sum_{j = 0}^{K - 1} h_j(\vx, \vtheta) = 1$ and $\forall j, h_j(\vx, \vtheta) > 0$.

According to the triangle inequality, we have:
\begin{equation} \label{ineq:g_lip_1}
\begin{aligned}
    &\|\nabla_{\vtheta_1}\mathcal{L}(\vx+\vdelta_1, \vtheta_1) -\nabla_{\vtheta_2} \mathcal{L}(\vx+\vdelta_2, \vtheta_2)\|\\
    \leq &\|\nabla_{\vtheta_1}\mathcal{L}(\vx+\vdelta_1, \vtheta_1) - \nabla_{\vtheta_1}\mathcal{L}(\vx+\vdelta_2, \vtheta_1)\| + \|\nabla_{\vtheta_1}\mathcal{L}(\vx+\vdelta_2, \vtheta_1) - \nabla_{\vtheta_2}\mathcal{L}(\vx+\vdelta_2, \vtheta_2)\| \\
\end{aligned}
\end{equation}

Plug (\ref{eq:g_loss}) to the first term on the right hand side of (\ref{ineq:g_lip_1}), we obtain:

\begin{equation}
\begin{aligned} \label{ineq:g_lip_1_1}
    \|\nabla_{\vtheta_1}\mathcal{L}(\vx&+\vdelta_1, \vtheta_1) - \nabla_{\vtheta_1}\mathcal{L}(\vx+\vdelta_2, \vtheta_1)\| \leq \sum_{i = 0}^{K - 1} y_i \left\| \nabla_{\vtheta_1}f_i(\vx+\vdelta_1,\vtheta_1)-\nabla_{\vtheta_1}f_i(\vx+\vdelta_2,\vtheta_1) \right\| \\
    + & \left\|\sum_{j = 0}^{K - 1} h_j(\vx + \vdelta_1, \vtheta_1) \nabla_\vtheta f_j(\vx + \vdelta_1, \vtheta_1) - \sum_{j = 0}^{K - 1} h_j(\vx + \vdelta_2, \vtheta_1) \nabla_\vtheta f_j(\vx + \vdelta_2, \vtheta_1)\right\|
\end{aligned}
\end{equation}

The first term can be bounded based on Assumption~\ref{assum_lip}.
The second term can be bounded as follows:

\begin{equation} \label{ineq:g_lip_1_2}
\begin{aligned}
& \left\|\sum_{j = 0}^{K - 1} h_j(\vx + \vdelta_1, \vtheta_1) \nabla_\vtheta f_j(\vx + \vdelta_1, \vtheta_1) - \sum_{j = 0}^{K - 1} h_j(\vx + \vdelta_2, \vtheta_1) \nabla_\vtheta f_j(\vx + \vdelta_2, \vtheta_1)\right\| \\
\leq &\left\| \sum_{j = 0}^{K - 1} h_j(\vx + \vdelta_1, \vtheta_1) \nabla_\vtheta f_j(\vx + \vdelta_1, \vtheta_1) \right\| + \left\| \sum_{j = 0}^{K - 1} h_j(\vx + \vdelta_2, \vtheta_1) \nabla_\vtheta f_j(\vx + \vdelta_2, \vtheta_1) \right\| \\
\leq & \sum_{j = 0}^{K - 1} h_j(\vx + \vdelta_1, \vtheta_1) \left\|\max_k \nabla_\vtheta f_k(\vx + \vdelta_1, \vtheta_1)\right\| +  \sum_{j = 0}^{K - 1} h_j(\vx + \vdelta_2, \vtheta_1) \left\|\max_k \nabla_\vtheta f_k(\vx + \vdelta_2, \vtheta_1)\right\| \\
\leq & 2 L_\vtheta
\end{aligned}
\end{equation}

Note that the bound on the right of (\ref{ineq:g_lip_1_2}) is tight.
The first inequality is based on the triangle inequality.
The second inequality and the third inequality can be achieved asymptotically when the equality of first-order Lipschitz continuity in Assumption~\ref{assum_lip} is achieved and the following condition is satisfied.
\begin{equation}
\begin{aligned}
\exists k_1 \in \argmax_i L^{(i)}_\vtheta, h_{k_1}(\vx + \vdelta_1, \vtheta_1) \rightarrow 1, \max_{j \neq k_1} h_j(\vx + \vdelta_1, \vtheta_1) \rightarrow 0 \\
\exists k_2 \in \argmax_i L^{(i)}_\vtheta, h_{k_2}(\vx + \vdelta_2, \vtheta_1) \rightarrow 1, \max_{j \neq k_2} h_j(\vx + \vdelta_2, \vtheta_1) \rightarrow 0 \\
\end{aligned}
\end{equation}
Note that $k_1$ and $k_2$ are not always the same, since there may exist more than one biggest first-order Lipschitz constant.

Combining (\ref{ineq:g_lip_1_1}) and (\ref{ineq:g_lip_1_2}) together, we obtain:

\begin{equation}
\begin{aligned}
\|\nabla_{\vtheta_1}\mathcal{L}(\vx+\vdelta_1, \vtheta_1) - \nabla_{\vtheta_1}\mathcal{L}(\vx+\vdelta_2, \vtheta_1)\| \leq 2 L_\vtheta +  L_{\vtheta\vx} \|\vdelta_2 - \vdelta_1\|
\end{aligned}
\end{equation}

Similarly, we have:

\begin{equation} \label{ineq:g_lip_2}
\begin{aligned}
\|\nabla_{\vtheta_1}\mathcal{L}(\vx + \vdelta_2, \vtheta_1) - \nabla_{\vtheta_2}\mathcal{L}(\vx+\vdelta_2, \vtheta_2)\| \leq  2 L_\vtheta +  L_{\vtheta\vtheta} \|\vtheta_2 - \vtheta_1\|  
\end{aligned}
\end{equation}

Combing the two inequalities above, we have:
\begin{equation}
\begin{aligned}
\|\nabla_\vtheta \gL(\vx + \vdelta_1, \vtheta_1) - \nabla_\vtheta \gL(\vx + \vdelta_2, \vtheta_2)\| \leq A_{\vtheta\vtheta} \|\vtheta_1 - \vtheta_2\| + B_{\vtheta\vtheta}
\end{aligned}
\end{equation}
where
\begin{equation}
\begin{aligned}
A_{\vtheta\vtheta} = L_{\vtheta\vtheta};\ \ B_{\vtheta\vtheta} = 4 L_\vtheta +  L_{\vtheta\vx} \|\vdelta_1 - \vdelta_2\|
\end{aligned}
\end{equation}
\end{proof}

\section{Theoretical Analysis of Soft Labels and Trade-off Loss Function} \label{sec:method_propo}
We have the following proposition about soft labels:
\begin{proposition} \label{propo:soft_label} \textbf{(Soft label improves Lipschitz continuity)}
Based on Theorem \ref{theorem_lip}, given a hard label vector $\vy_h\in\{0,1\}^K$ and a soft label vector $\vy_s\in(0,1)^K$, we have
$ A_{\vtheta}(\vy_s) \leq A_{\vtheta}(\vy_h)$.
\end{proposition}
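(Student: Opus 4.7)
The plan is to leverage the explicit formula $A_{\vtheta} = 2 L_\vtheta \sum_{i \in \mathcal{S}_+} y_i$ from Theorem~\ref{theorem_lip} and analyze how the sum $\sum_{i \in \mathcal{S}_+} y_i$ changes as the labels vary. A first thing to note is that the set $\mathcal{S}_+$ is determined by the model outputs at $\vtheta_1$ and $\vtheta_2$ alone and does not depend on $\vy$, so the comparison cleanly reduces to comparing two weighted sums over the same index set.

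The key structural fact I would establish is that $\mathcal{S}_+ \subsetneq \{0,1,\ldots,K-1\}$. Since $\{h_i(\vx+\vdelta_1,\vtheta_1)\}_i$ and $\{h_i(\vx+\vdelta_1,\vtheta_2)\}_i$ are both softmax outputs and hence each sum to $1$, not all coordinates can strictly increase when moving from $\vtheta_1$ to $\vtheta_2$; there is always at least one index $i^\star$ with $h_{i^\star}(\vx+\vdelta_1,\vtheta_2)\le h_{i^\star}(\vx+\vdelta_1,\vtheta_1)$, so $i^\star\notin\mathcal{S}_+$.

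Next I would compare the two cases by taking a worst-case view (which is what the Lipschitz constant represents), i.e.\ upper bounding $\sum_{i\in\mathcal{S}_+} y_i$ uniformly over all admissible $\mathcal{S}_+ \subsetneq \{0,\ldots,K-1\}$. For $\vy_h\in\{0,1\}^K$ one-hot, $\sum_{i\in\mathcal{S}_+}y_{h,i}\in\{0,1\}$ with worst-case value $1$ (the unique nonzero coordinate may lie inside $\mathcal{S}_+$), giving $A_\vtheta(\vy_h)=2L_\vtheta$. For $\vy_s\in(0,1)^K$, strict positivity of every coordinate combined with $\mathcal{S}_+\neq\{0,\ldots,K-1\}$ yields
\begin{equation}
\sum_{i\in\mathcal{S}_+} y_{s,i} \;=\; 1 - \sum_{i\notin\mathcal{S}_+} y_{s,i} \;\le\; 1-\min_i y_{s,i} \;<\; 1,
\end{equation}
so $A_\vtheta(\vy_s)<2L_\vtheta$. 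Combining the two bounds gives $A_\vtheta(\vy_s)\le A_\vtheta(\vy_h)$, and in fact the inequality is strict at the level of worst-case constants.

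The main obstacle, I expect, is the interpretational subtlety rather than any calculation: a naive pointwise comparison at a fixed tuple $(\vx,\vdelta_1,\vtheta_1,\vtheta_2)$ can fail, because the one-hot mass of $\vy_h$ may happen to sit outside $\mathcal{S}_+$, making $A_\vtheta(\vy_h)=0$ while $A_\vtheta(\vy_s)>0$. The statement must therefore be read as a comparison of the (tight) Lipschitz constants, i.e.\ suprema of the bound over all admissible $\mathcal{S}_+$; once this is fixed, the argument is just the simplex bookkeeping above, with the strict positivity of soft-label entries driving the separation from the hard-label worst case.
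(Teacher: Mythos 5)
Your proof is correct and follows essentially the same route as the paper's: you compute the hard-label worst case $A_\vtheta(\vy_h)=2L_\vtheta$, use the softmax normalization $\sum_i h_i=1$ to conclude $\mathcal{S}_+\subsetneq\{0,\ldots,K-1\}$, and then use strict positivity of the soft-label entries to get $\sum_{i\in\mathcal{S}_+}y_{s,i}<1$. Your explicit remark that the inequality must be read as a comparison of worst-case constants (since a pointwise comparison can fail when the one-hot mass lies outside $\mathcal{S}_+$) is a point the paper leaves implicit by simply asserting $A_\vtheta(\vy_h)=2L_\vtheta$, and is a welcome clarification.
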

\begin{proof}
For hard label $\vy_h \in \{0,1\}^K$, let that the $j$-th elements of $\vy_h$ be $1$ and the rest be $0$. By the definition of $A_{\vtheta}$ in Theorem \ref{theorem_lip}, we have
\begin{equation}
\begin{aligned}
    A_{\vtheta}(\vy_h) =  2L_{\vtheta}.
\end{aligned}
\end{equation}
It is known that $\sum_{i=0}^{K-1}h_i(\vx, \vtheta)=1$, which means $\exists j,~ h_j(\vx+\vdelta_1, \vtheta_2)\leq h_j(\vx+\vdelta_1, \vtheta_1)$. Then, for soft label $\vy_s \in (0,1)^K$, we have $|\mathcal{S}_+|<K$ where $\mathcal{S}_+=\{i~|~y_i> 0, h_i(\vx+\vdelta_1, \vtheta_2)>h_i(\vx+\vdelta_1, \vtheta_1)\}$. Thus, it holds
\begin{equation}
    A_{\vtheta}(\vy_s) = 2\sum_{i\in\mathcal{S}_+}y_s^{(i)}L_\vtheta \leq A_{\vtheta}(\vy_h).
\end{equation}
The equality can be achieved asymptotically if $\sum_{i\notin\mathcal{S}_+} y_s^{(i)} \rightarrow0$.
\end{proof}
Proposition \ref{propo:soft_label} indicates that soft labels lead to a reduced first-order Lipschitz constant, thereby enhancing the Lipschitz continuity of the adversarial loss function. 

Then, we have the following proposition about trade-off loss function:
\begin{proposition}
\label{propo:tradeoff_loss} \textbf{(Trade-off loss function improves Lipschitz smoothness)}
If Assumption \ref{assum_lip} and \ref{assum_lip_g} hold, we have:
\begin{equation}
     \|\nabla_\theta \gL_{\epsilon, \alpha}(\vx, \vtheta_1) - \nabla_\theta \gL_{\epsilon, \alpha}(\vx, \vtheta_2)\| \leq A_{\vtheta\vtheta} \|\vtheta_1 - \vtheta_2\| + B'_{\vtheta \vdelta}
\end{equation}
The constants $A_{\vtheta\vtheta} = L_{\vtheta\vtheta}$ and $B_{\vtheta\vdelta}'=\alpha L_{\vtheta\vx} \|\vdelta_1 - \vdelta_2\|+2(1+\alpha) L_\vtheta $ where $\vdelta_1 \in \argmax_{\vdelta \in \gS_\epsilon(\vx)} \gL(\vx+\vdelta, \vtheta_1)$ and $\vdelta_2 \in \argmax_{\vdelta \in \gS_\epsilon(\vx)} \gL(\vx+\vdelta, \vtheta_2)$.
\end{proposition}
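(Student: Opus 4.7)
The plan is to exploit linearity of $\gL_{\epsilon,\alpha}$ in its clean and adversarial pieces, so that the second-order smoothness statement reduces to two bounds of a type we already established. Writing
\begin{equation}
\nabla_\vtheta \gL_{\epsilon,\alpha}(\vx,\vtheta) = (1-\alpha)\nabla_\vtheta \gL(\vx,\vtheta) + \alpha \nabla_\vtheta \gL_\epsilon(\vx,\vtheta),
\end{equation}
the triangle inequality gives
\begin{equation}
\|\nabla_\vtheta \gL_{\epsilon,\alpha}(\vx,\vtheta_1) - \nabla_\vtheta \gL_{\epsilon,\alpha}(\vx,\vtheta_2)\| \leq (1-\alpha) T_{\rm cln} + \alpha T_{\rm adv},
\end{equation}
where $T_{\rm cln}$ and $T_{\rm adv}$ denote the $\vtheta_1$-to-$\vtheta_2$ gradient distances of the clean and the adversarial loss respectively. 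The whole proof will amount to bounding these two terms separately and collecting coefficients.

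For $T_{\rm adv}$ I would directly invoke Theorem~\ref{theorem_lip_g}, obtaining $T_{\rm adv} \leq L_{\vtheta\vtheta}\|\vtheta_1-\vtheta_2\| + L_{\vtheta\vx}\|\vdelta_1-\vdelta_2\| + 4L_\vtheta$, with $\vdelta_1,\vdelta_2$ as in the statement. For $T_{\rm cln}$ I would reuse the machinery from the proof of Theorem~\ref{theorem_lip_g}: the clean loss corresponds to the case where no max operator is present, i.e.\ the same input is used for both $\vtheta_1$ and $\vtheta_2$. Re-running the derivation of inequality (\ref{ineq:g_lip_2}) verbatim with $\vdelta = \mathbf{0}$ (no perturbation mismatch to absorb, so the $L_{\vtheta\vx}\|\vdelta_1-\vdelta_2\|$ term disappears) yields $T_{\rm cln} \leq L_{\vtheta\vtheta}\|\vtheta_1-\vtheta_2\| + 2L_\vtheta$, where the constant $2L_\vtheta$ again arises from the softmax-weighted gradient expression (\ref{eq:g_loss}) together with Assumption~\ref{assum_lip}.

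Assembling the two pieces, the coefficient of $\|\vtheta_1-\vtheta_2\|$ is $(1-\alpha)L_{\vtheta\vtheta} + \alpha L_{\vtheta\vtheta} = L_{\vtheta\vtheta} = A_{\vtheta\vtheta}$, and the remaining constant becomes
\begin{equation}
\alpha L_{\vtheta\vx}\|\vdelta_1-\vdelta_2\| + \bigl[2(1-\alpha) + 4\alpha\bigr]L_\vtheta = \alpha L_{\vtheta\vx}\|\vdelta_1-\vdelta_2\| + 2(1+\alpha)L_\vtheta,
\end{equation}
which is exactly the announced $B'_{\vtheta\vdelta}$. The main thing to be careful about is the coefficient bookkeeping, in particular justifying that $T_{\rm cln}$ really drops the $L_{\vtheta\vx}$ contribution while retaining $2L_\vtheta$; this follows because the only place where $\|\vdelta_1-\vdelta_2\|$ enters in the proof of Theorem~\ref{theorem_lip_g} is the second-order input-smoothness step (\ref{ineq:g_lip_1_2}), which is vacuous when both gradients are evaluated at the unperturbed $\vx$. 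No new ingredient beyond Assumptions~\ref{assum_lip}--\ref{assum_lip_g} is required, so the proof should fit in essentially the same length as that of Theorem~\ref{theorem_lip_g}.
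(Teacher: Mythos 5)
Your proposal is correct and follows essentially the same route as the paper's proof: split $\nabla_\vtheta\gL_{\epsilon,\alpha}$ by linearity and the triangle inequality, bound the adversarial term by Theorem~\ref{theorem_lip_g} and the clean term by the intermediate inequality~(\ref{ineq:g_lip_2}) (which holds for any fixed input, in particular the unperturbed $\vx$), then collect coefficients to get $A_{\vtheta\vtheta}=L_{\vtheta\vtheta}$ and $B'_{\vtheta\vdelta}=\alpha L_{\vtheta\vx}\|\vdelta_1-\vdelta_2\|+2(1+\alpha)L_\vtheta$. The only nit is that the $\|\vdelta_1-\vdelta_2\|$ term actually enters through the first summand of~(\ref{ineq:g_lip_1_1}) (the input-smoothness step) rather than through~(\ref{ineq:g_lip_1_2}), but this does not affect your argument.
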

\begin{proof}
By the definition of $\gL_{\epsilon, \alpha}$ in Section \ref{sec:soft_label}, we have
\begin{equation} \label{ineq_tradeoff_1}
\begin{aligned}
    &\|\nabla_{\vtheta_1}\gL_{\epsilon, \alpha}(\vx,\vtheta_1)-\nabla_{\vtheta_2}\gL_{\epsilon, \alpha}(\vx,\vtheta_2)\|\\
    &\leq (1-\alpha)\|\nabla_{\vtheta_1}\mathcal{L}(\vx, \vtheta_1)-\nabla_{\vtheta_1}\mathcal{L}(\vx, \vtheta_2)\| + \alpha \|\nabla_{\vtheta_1}\gL_{\epsilon}(\vx, \vtheta_1)-\nabla_{\vtheta_1}\gL_{\epsilon}(\vx, \vtheta_2)\|
\end{aligned}
\end{equation}
According to (\ref{ineq:g_lip_2}) in the proof of Theorem \ref{theorem_lip_g}, the first term of the right hand side of (\ref{ineq_tradeoff_1}) can be derived as
\begin{equation}\label{ineq_tradeoff_1_1}
    \|\nabla_{\vtheta_1}\mathcal{L}(\vx, \vtheta_1)-\nabla_{\vtheta_2}\mathcal{L}(\vx, \vtheta_2)\|  \leq  L_{\vtheta\vtheta}\|\vtheta_1-\vtheta_2\| + 2L_{\vtheta}.
\end{equation}
According to Theorem \ref{theorem_lip_g}, the second term of the right hand side of (\ref{ineq_tradeoff_1}) satisifies
\begin{equation} \label{ineq_tradeoff_1_2}
    \|\nabla_{\vtheta_1}\gL_{\epsilon}(\vx, \vtheta_1)-\nabla_{\vtheta_2}\gL_{\epsilon}(\vx, \vtheta_2)\| \leq  L_{\vtheta\vtheta}\|\vtheta_1-\vtheta_2\| + L_{\vtheta\vx}\|\vdelta_1-\vdelta_2\| + 4L_{\vtheta}.
\end{equation}
Combining (\ref{ineq_tradeoff_1}), (\ref{ineq_tradeoff_1_1}) and (\ref{ineq_tradeoff_1_2}), we have
\begin{equation}
    \|\nabla_{\vtheta_1}\gL_{\epsilon, \alpha}(\vx,\vtheta_1)-\nabla_{\vtheta_2}\gL_{\epsilon, \alpha}(\vx,\vtheta_2)\| \leq A_{\vtheta\vtheta}\|\vtheta_1 - \vtheta_2\| + B'_{\vtheta\vdelta},
\end{equation}
where $A_{\vtheta\vtheta} = L_{\vtheta\vtheta}$ and $B'_{\vtheta\vdelta}=\alpha L_{\vtheta\vx} \|\vdelta_1 - \vdelta_2\|+2(1+\alpha) L_\vtheta $.
\end{proof}
According to Proposition \ref{propo:tradeoff_loss}, the trade-off loss function $\gL_{\epsilon, \alpha}$ enhances the second-order smoothness of adversarial loss objective function. 
Furthermore, compared with $l_1$, $l_2$ and $l_\infty$ cases, the trade-off loss function is particularly useful and necessary in the $l_0$ case. This is supported by the analyses in Section \ref{sec:theory} and Appendix~\ref{app:perturbation_bound}, which demonstrate that $\|\vdelta_1 - \vdelta_2\|$ is much larger in $l_0$ bounded perturbations than other cases.
Therefore, we expect the trade-off loss function $\gL_{\epsilon, \alpha}$ can help mitigate CO by improving smoothness.

Similar to Theorem~\ref{theorem_lip_g}, Proposition~\ref{propo:tradeoff_loss} can be straightforwardly extended to the networks with non-smooth activations, where Assumption~\ref{assum_lip_g} is not strictly satisfied.
We provide a more detailed analysis in Appendix~\ref{sec:ap_relu} to demonstrate the generality of our conclusions.

\section{Theoretical Analysis of ReLU Networks} \label{sec:ap_relu}
Similar to \cite{liu2020loss}, we first make the following assumptions for the functions $\{f_i\}_{i=0}^{K-1}$ represented by a ReLU network.
\begin{assumption} \label{assum_relu}
    $\forall i\in\{0,1,...,K-1\}$, the function $f_i$ satisfies the following conditions:
    \begin{align}
    \forall \vx, \vtheta_1, \vtheta_2, \ \ \|f_i(\vx,\vtheta_1)-f_i(\vx,\vtheta_2)\| &\leq L_{\vtheta}\|\vtheta_1-\vtheta_2\|,\\
    \forall \vtheta, \vx_1, \vx_2, \ \ \|f_i(\vx_1,\vtheta)-f_i(\vx_2,\vtheta)\| &\leq L_{\vx}\|\vx_1-\vx_2\|,\\
    \forall \vx, \vtheta_1, \vtheta_2,\ \ \|\nabla_{\vtheta} f_i(\vx,\vtheta_1)-\nabla_{\vtheta} f_i(\vx,\vtheta_2)\| &\leq L_{\vtheta\vtheta}\|\vtheta_1-\vtheta_2\| + C_{\vtheta\vtheta},\\
    \forall \vtheta, \vx_1, \vx_2,\ \ \|\nabla_{\vtheta} f_i(\vx_1,\vtheta)-\nabla_{\vtheta} f_i(\vx_2,\vtheta)\| &\leq L_{\vtheta\vx}\|\vx_1-\vx_2\| + C_{\vtheta\vx}.  
\end{align}
\end{assumption}
Compared to Assumption \ref{assum_lip} and \ref{assum_lip_g}, we modify the the second-order smoothness assumptions by adding two constants $C_{\vtheta\vtheta}$ and $ C_{\vtheta\vx}$, respectively. They denote the upper bound of the gradient difference in the neighborhood at non-smooth point. Thus, they quantify how drastically the (sub)gradients can change in a sufficiently small region in the parameter space.

Based on Assumption \ref{assum_relu}, we have the following corollary:
\begin{corollary} \label{corol_lip_relu}
If Assumption \ref{assum_relu} is satisfied, it holds
\begin{align}
    \|\mathcal{L}_\epsilon(\vx, \vtheta_1) - \mathcal{L}_\epsilon(\vx, \vtheta_2)\| &\leq A_{\vtheta}\|\vtheta_1 - \vtheta_2\|, \\
    \|\nabla_{\vtheta}\mathcal{L}_\epsilon(\vx, \vtheta_1) - \nabla_{\vtheta}\mathcal{L}_\epsilon(\vx, \vtheta_2)\| &\leq A_{\vtheta\vtheta}\|\vtheta_1 - \vtheta_2\| + B_{\vtheta\vdelta} + C_{\vtheta\vtheta} + C_{\vtheta\vx}.
\end{align}
The Lipschitz constant $A_{\vtheta} =  2\sum_{i\in\mathcal{S}_+}y_iL_\vtheta$, $A_{\vtheta\vtheta} = L_{\vtheta\vtheta}$ and $B_{\vtheta\vdelta}= L_{\vtheta\vx} \|\vdelta_1 - \vdelta_2\| + 4 L_\vtheta$ where $\vdelta_1 \in \argmax_{\vdelta \in \gS_\epsilon} \gL(\vx + \vdelta, \vtheta_1)$ and $\vdelta_2 \in \argmax_{\vdelta \in \gS_\epsilon} \gL(\vx + \vdelta, \vtheta_2)$.
\end{corollary}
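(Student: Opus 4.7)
The plan is to reduce Corollary \ref{corol_lip_relu} to the previously proven Theorems \ref{theorem_lip} and \ref{theorem_lip_g} by observing that Assumption \ref{assum_relu} differs from Assumptions \ref{assum_lip} and \ref{assum_lip_g} only by additive constants $C_{\vtheta\vtheta}$ and $C_{\vtheta\vx}$ in the two second-order conditions. The first inequality of the corollary is immediate: Assumption \ref{assum_relu} retains exactly the first-order Lipschitz conditions used in Theorem \ref{theorem_lip}, so the bound $\|\mathcal{L}_\epsilon(\vx, \vtheta_1) - \mathcal{L}_\epsilon(\vx, \vtheta_2)\| \leq A_\vtheta \|\vtheta_1 - \vtheta_2\|$ follows verbatim from the argument in Appendix \ref{app:proof_theorem_lip}.

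For the second inequality, I would retrace the proof of Theorem \ref{theorem_lip_g} in Appendix \ref{app:proof_theorem_lip_g}, keeping the triangle-inequality split in (\ref{ineq:g_lip_1}) intact. The decomposition of the first summand $\|\nabla_{\vtheta_1}\mathcal{L}(\vx+\vdelta_1, \vtheta_1) - \nabla_{\vtheta_1}\mathcal{L}(\vx+\vdelta_2, \vtheta_1)\|$ given in (\ref{ineq:g_lip_1_1}) produces a term of the form $\sum_i y_i \|\nabla_\vtheta f_i(\vx+\vdelta_1,\vtheta_1) - \nabla_\vtheta f_i(\vx+\vdelta_2,\vtheta_1)\|$, which under Assumption \ref{assum_relu} is bounded by $L_{\vtheta\vx}\|\vdelta_1-\vdelta_2\| + C_{\vtheta\vx}$, using $\sum_i y_i = 1$. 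The remaining softmax-weighted gradient term in (\ref{ineq:g_lip_1_2}) depends only on first-order Lipschitz continuity, so it still contributes $2L_\vtheta$ with no new constant. Analogously, the second summand from (\ref{ineq:g_lip_2}) decomposes into a piece bounded by $L_{\vtheta\vtheta}\|\vtheta_1-\vtheta_2\| + C_{\vtheta\vtheta}$ (via the modified second-order condition in $\vtheta$) and another $2L_\vtheta$ from the softmax part. Summing the four contributions yields $A_{\vtheta\vtheta}\|\vtheta_1-\vtheta_2\| + B_{\vtheta\vdelta} + C_{\vtheta\vtheta} + C_{\vtheta\vx}$, which is exactly the claimed bound.

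The main point requiring care is ensuring that the additive constants $C_{\vtheta\vtheta}$ and $C_{\vtheta\vx}$ do not pick up multiplicative factors. In the original proof they enter inside sums weighted by either the label vector $\vy$ or the softmax output $\{h_j\}$; since both are simplex vectors, $\sum_i y_i = \sum_j h_j = 1$, so the constants propagate through unchanged. I would also verify that the specific places in (\ref{ineq:g_lip_1_1}) and (\ref{ineq:g_lip_2}) where Assumption \ref{assum_lip_g} is invoked are the only ones affected; all other inequalities in the Appendix \ref{app:proof_theorem_lip_g} proof rely solely on the triangle inequality, first-order Lipschitz continuity, or the softmax structure, which are unaffected by the switch to Assumption \ref{assum_relu}. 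Since no step of the original argument uses the full smoothness of $f_i$ beyond these two invocations, the extension to ReLU-type networks is a bookkeeping exercise, and no further structural modification of the proof is needed.
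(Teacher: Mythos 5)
Your proposal is correct and follows exactly the route the paper intends: the paper itself gives no detailed proof of Corollary~\ref{corol_lip_relu}, stating only that it is ``similar to that of Theorem~\ref{theorem_lip} and~\ref{theorem_lip_g},'' and your retracing supplies precisely those details. In particular, your observation that the additive constants $C_{\vtheta\vtheta}$ and $C_{\vtheta\vx}$ enter only through the two invocations of the second-order conditions and propagate unscaled because $\sum_i y_i = \sum_j h_j = 1$ is the key bookkeeping point, and it is handled correctly.
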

The proof is similar to that of Theorem \ref{theorem_lip} and \ref{theorem_lip_g}. Corollary \ref{corol_lip_relu} indicates a more craggy loss landscape in the adversarial training of networks with non-smooth activations.

Additionally, the Proposition \ref{propo:tradeoff_loss} can be easily extended to accommodate Assumption \ref{assum_relu}.
\begin{corollary}
    If Assumption \ref{assum_relu} holds, then we have
    \begin{equation}
     \|\nabla_\theta \gL_{\epsilon, \alpha}(\vx, \vtheta_1) - \nabla_\theta \gL_{\epsilon, \alpha}(\vx, \vtheta_2)\| \leq A_{\vtheta\vtheta} \|\vtheta_1 - \vtheta_2\| + B'_{\vtheta \vdelta} + C_{\vtheta\vtheta} + C_{\vtheta\vx}.
\end{equation}
The Lipschitz constant $A_{\vtheta\vtheta} = L_{\vtheta\vtheta}$ and $B_{\vtheta\vdelta}'=\alpha L_{\vtheta\vx} \|\vdelta_1 - \vdelta_2\|+2(1+\alpha) L_\vtheta $ where $\vdelta_1 \in \argmax_{\vdelta \in \gS_\epsilon} \gL(\vx+\vdelta, \vtheta_1)$ and $\vdelta_2 \in \argmax_{\vdelta \in \gS_\epsilon} \gL(\vx+\vdelta, \vtheta_2)$.
\end{corollary}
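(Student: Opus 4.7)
The plan is to mirror the proof of Proposition~\ref{propo:tradeoff_loss} while substituting Assumption~\ref{assum_relu} for Assumption~\ref{assum_lip_g} wherever second-order smoothness is invoked, and to absorb the resulting additive slack into the constants $C_{\vtheta\vtheta}$ and $C_{\vtheta\vx}$. First, I would apply the triangle inequality to the trade-off gradient exactly as in~(\ref{ineq_tradeoff_1}), producing a convex combination of a clean-loss gradient difference with weight $1-\alpha$ and an adversarial-loss gradient difference with weight $\alpha$.

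Next, I would bound each piece under the weaker, ReLU-compatible smoothness hypotheses. For the clean-loss piece, the derivation that yielded~(\ref{ineq:g_lip_2}) in the proof of Theorem~\ref{theorem_lip_g} uses the second-order smoothness assumption only to control $\|\nabla_\vtheta f_i(\vx,\vtheta_1) - \nabla_\vtheta f_i(\vx,\vtheta_2)\|$; replacing that bound by the third line of Assumption~\ref{assum_relu} adds a single $C_{\vtheta\vtheta}$ term, giving $\|\nabla_\vtheta\gL(\vx,\vtheta_1) - \nabla_\vtheta\gL(\vx,\vtheta_2)\| \leq L_{\vtheta\vtheta}\|\vtheta_1-\vtheta_2\| + 2L_\vtheta + C_{\vtheta\vtheta}$. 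The adversarial piece is handled directly by Corollary~\ref{corol_lip_relu}, which already contributes $C_{\vtheta\vtheta} + C_{\vtheta\vx}$ on top of the smooth-case bound.

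The final step combines the two bounds with weights $1-\alpha$ and $\alpha$. The coefficient algebra on $L_\vtheta$, $L_{\vtheta\vtheta}$ and $L_{\vtheta\vx}$ is identical to that in Proposition~\ref{propo:tradeoff_loss} and reproduces $A_{\vtheta\vtheta}\|\vtheta_1-\vtheta_2\| + B'_{\vtheta\vdelta}$ verbatim. The extra non-smooth terms consolidate to $(1-\alpha)C_{\vtheta\vtheta} + \alpha(C_{\vtheta\vtheta} + C_{\vtheta\vx}) = C_{\vtheta\vtheta} + \alpha C_{\vtheta\vx}$, which, using $\alpha \in [0,1]$, is upper bounded by $C_{\vtheta\vtheta} + C_{\vtheta\vx}$ and yields the claimed inequality.

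I do not expect a substantial obstacle here; the main subtlety is bookkeeping. Specifically, I would verify that the softmax manipulation in~(\ref{ineq:g_lip_1_2}), which produces the $2L_\vtheta$ slack in the smooth case, relies only on the \emph{first-order} Lipschitz condition of Assumption~\ref{assum_relu} (which is unchanged), so no additional non-smooth constants leak in through that step. Once this is checked, the extension is a purely constant-level modification of the Proposition~\ref{propo:tradeoff_loss} argument, and the slight looseness from $\alpha C_{\vtheta\vx} \leq C_{\vtheta\vx}$ is the only place where the bound is not tight relative to the smooth case.
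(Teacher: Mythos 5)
Your proposal is correct and follows essentially the same route the paper intends: the paper proves this corollary only implicitly, by noting that Proposition~\ref{propo:tradeoff_loss} extends to Assumption~\ref{assum_relu} in the same way Theorem~\ref{theorem_lip_g} extends to Corollary~\ref{corol_lip_relu}, which is exactly the decomposition and bookkeeping you carry out. Your observation that the non-smooth slack actually consolidates to $C_{\vtheta\vtheta} + \alpha C_{\vtheta\vx}$, slightly tighter than the stated $C_{\vtheta\vtheta} + C_{\vtheta\vx}$, is a correct and worthwhile refinement.
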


\section{\texorpdfstring{Discussion of the Upper Bound of $\|\vdelta_1-\vdelta_2\|$ }{Discussion of the Upper Bound of ||delta_1-delta_2||}}\label{app:perturbation_bound}
We define the $l_p$ adversarial budget for the perturbation $\vdelta\in\mathbb{R}^d$ as $\gS_{\epsilon}^{(p)} = \{\vdelta ~|~\|\vdelta\|_p\leq\epsilon,~0\leq\vx+\vdelta\leq1\}$. Therefore, we have $\|\vdelta_1-\vdelta_2\|_p\leq2\epsilon$, and $\forall i,~0\leq|\delta_1^{(i)}-\delta_2^{(i)}|\leq 1$ where $\delta_1^{(i)}$ and $\delta_2^{(i)}$ are the $i$-th element of $\vdelta_1$ and $\vdelta_2$, respectively. For convenience, we denote $\vdelta_1-\vdelta_2$ as $\Delta\vdelta$ and $\delta_1^{(i)}-\delta_2^{(i)}$ as $\Delta\delta_i$ in the following. 

Assume that $\epsilon \ll d$ for $l_0$, $l_1$ and $l_2$ bounded perturbations, and $\epsilon \ll 1$ for the $l_\infty$ bounded perturbation. Then, $\forall q\geq 1$, we have
\begin{equation} \label{eq:upper_bound_delta}
    \begin{aligned}
    \text{$l_0$ budget:}~~~~ \sum_i |\Delta\delta_i|^q &\leq 2\epsilon,~~~~~~~~\\
    \text{$l_1$ budget:}~~~~\sum_i |\Delta\delta_i|^q&\leq D_1 + (2\epsilon-D_1)^q,~~~~~~~~\\
    \text{$l_2$ budget:}~~~~\sum_i |\Delta\delta_i|^q&\leq D_2 + (4\epsilon^2-D_2)^{\frac{q}{2}},~~~~~~~~\\
    \text{$l_\infty$ budget:}~~~~\sum_i |\Delta\delta_i|^q&\leq  d\times(2\epsilon)^q,~~~~~~~~
\end{aligned}
\end{equation}
where $D_1=\lfloor2\epsilon\rfloor$ and $D_2=\lfloor4\epsilon^2\rfloor$. The derived upper bounds are tight because

\textbf{(1) $l_0$ budget:} The equality achieves when the location of non-zero elements in $\vdelta_1$ and $\vdelta_2$ has no overlap, and the magnitude of their non-zero elements reaches $\pm 1$.

\textbf{(2) $l_1$ budget:} Since $0\leq|\Delta\delta_i|\leq1$, the equality achieves when there exists at most one $\Delta\delta_k$ such that $|\Delta\delta_k|<1$ and $\forall j\neq k,~|\Delta\delta_j|=1$. The maximum number of $\Delta\delta_j$ is $\lfloor2\epsilon\rfloor$. Then, according to $\|\Delta\vdelta\|_1\leq2\epsilon$, we have $|\Delta\delta_k|=2\epsilon-1\times\lfloor2\epsilon\rfloor$.

\textbf{(3) $l_2$ budget:} The derivation is similar to that of the $l_1$ case.

\textbf{(4) $l_\infty$ budget:} The equality achieves when $\vdelta_1 = -\vdelta_2$.

On popular benchmark CIFAR-10, $d=32\times 32\times 3 = 3072$, and the commonly used values of $\epsilon$ in the $l_0$, $l_1$, $l_2$ and $l_\infty$ cases are $360$, $24$, $0.5$ and $8/255$, respectively \citep{madry2017towards, zhong2024efficient, croce2021mind, jiang2023towards}.
Substitute these into (\ref{eq:upper_bound_delta}), we can easily get that $\forall q\geq1$, the upper bound of $\sum_i |\Delta\delta_i|^q$ is significantly larger in the $l_0$ case than the other cases. 
For instance, $(2\epsilon-D_1)^q$, $(4\epsilon^2-D_2)^{\frac{q}{2}}$ and $(2\epsilon)^q$ reach their respective maximum values when $q=1$, since all of them are smaller than $1$. Then, the upper bounds of $\sum_i |\Delta\delta_i|^1$ in the $l_0$, $l_1$, $l_2$ and $l_\infty$ cases are $720$, $24$, $1$ and $49152/255\approx 192.8$, respectively.

Furthermore, the $l_q$ norm of $\Delta\vdelta$ is defined as follows:
\begin{equation}
    \|\Delta\vdelta\|_q = \left( \sum_i |\Delta\delta_i|^q \right)^{\frac{1}{q}}.
\end{equation}
Since the upper bound of $\sum_i |\Delta\delta_i|^q$ in the $l_0$ case is larger than $1$ for all $q\geq1$, we can also derive that $\forall q\geq1$, the upper bound of $\|\Delta\vdelta\|_q$ is always significantly larger in the $l_0$ case than the other cases.

\section{More Experimental Results} \label{sec:app_exp}

\subsection{Early Stopping in Multi-step Adversarial Training Avoids Catastrophic Overfitting} \label{sec:es}
\begin{figure}[tb]
    \vspace{-1em}
    \centering
    \small
    \subfigure[\small Gradient Norm]
    {\includegraphics[width=0.4\textwidth]{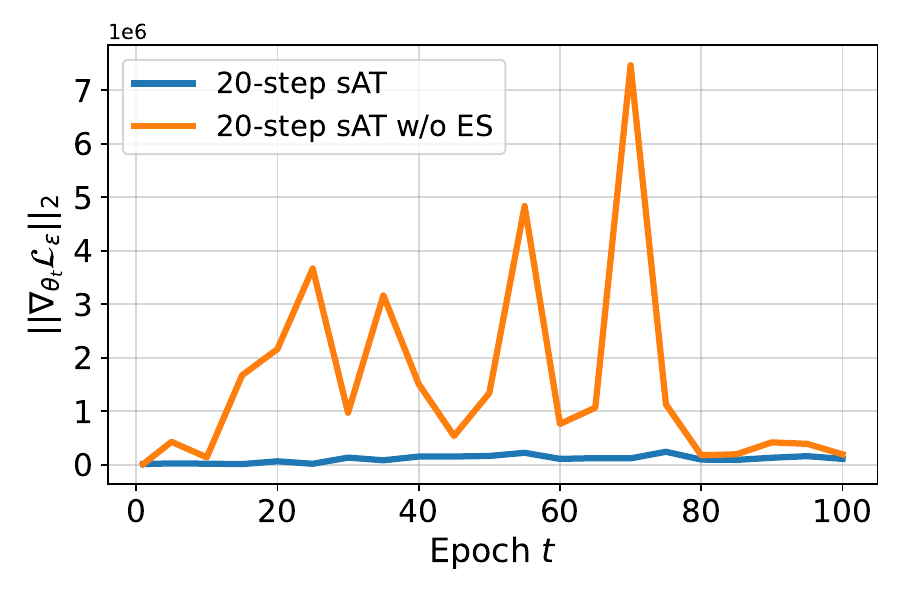}\label{fig:grad_norm}}
    \subfigure[\small Test Robust Accuracy]
    {\includegraphics[width=0.4\textwidth]{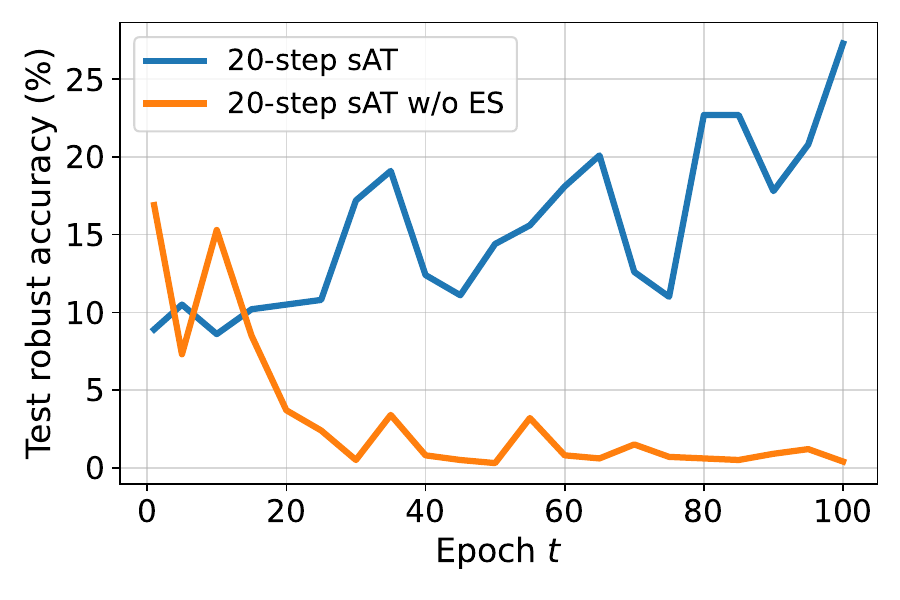}\label{fig:traj_acc}}
    \vspace{-1em}
    \caption{Relationship between craggy loss landscape and CO. \textbf{(a)} Gradient norm $\|\nabla_{\vtheta_{t}}\mathcal{L}_{\epsilon}\|_2$. 
    \textbf{(b)} Test robust accuracy against sAA ($\epsilon=20$). The results are obtained from PreactResNet-18 trained on CIFAR-10 with $\epsilon_{train}=40$. Since the training of $20$-step sAT w/o ES diverges under $\epsilon_{train}=120$, the results are presented under $\epsilon_{train}=40$ instead.}
    \label{fig:traj}
    \vspace{-1em}
\end{figure}
In Figure~\ref{fig:traj}, we compare the cases with and without ES in terms of gradient norm and robust accuracy on the test set by sAA.
We can observe that $20$-step sAT without ES still suffers from CO and the corresponding gradient magnitude during training indicates a craggy loss landscape.

\subsection{Distances between Gradients Induced by 1-step and Multi-step Attacks}\label{app:grad}
\begin{table}[htb]
\vspace{-0.5em}
    \centering
    \caption{Average $l_2$ distances between gradients induced by $1$-step and multi-step attacks, represented by $\|\nabla_{\vtheta}\mathcal{L}_{\epsilon}(\vx+\delta_{one}) - \nabla_{\vtheta}\mathcal{L}_{\epsilon}(\vx+\delta_{multi})\|_2$. The gradients are calculated of the training set of CIFAR-10 \citep{krizhevsky2009learning}. The $l_0$, $l_1$, $l_2$ and $l_\infty$ models are obtained by $1$-step sAT \citep{zhong2024efficient}, Fast-EG-$l_1$ \citep{jiang2023towards}, $1$-step PGD \citep{rice2020overfitting} and GradAlign \citep{andriushchenko2020square}, respectively. The $1$-step and multi-step $l_0$ attacks are $1$-step and $10000$-step sPGD \citep{zhong2024efficient}, respectively. The $1$-step and multi-step $l_1$ attacks are $1$-step Fast-EG-$l_1$ and $100$-step APGD \citep{croce2021mind}, respectively.The $1$-step and multi-step attacks for other norms are $1$-step PGD \citep{madry2017towards} and $100$-step APGD \citep{croce2020reliable}, respectively.}
    \small
    \begin{tabular}{c|c c c c}
    \toprule[1.5pt]
    Model & $l_0$ ($\epsilon=1$) & $l_1$ ($\epsilon=24$) & $l_2$ ($\epsilon=0.5$) & $l_\infty$ ($\epsilon=8/255$)\\
    \midrule[1pt]
    $l_2$ distance & $15.8$ & $9.1\times10^{-4}$ & $3.6\times10^{-4}$ & $6.7\times10^{-4}$\\
    \bottomrule[1.5pt]
    \end{tabular}
    \label{tab:grad_dist}
    \vspace{-0.5em}
\end{table}
Based on the Lipschitz smoothness assumption in Inequality~(\ref{ineq:assum4}), the gradient difference arising from approximated adversarial perturbations is bounded by $L_{\vtheta\vx}\|\vdelta_1 - \vdelta_2\|$ where $\vdelta_1$ is the perturbation generated by $1$-step attack and $\vdelta_2$ is the optimal perturbation. Based on the same reason that $l_0$ norm is not a proper norm, $\|\vdelta_1 - \vdelta_2\|$ is significantly larger in $l_0$ cases than $l_\infty$, $l_2$ and $l_1$ cases, which makes $1$-step adversarial training more challenging in $l_0$ cases.
To corroborate this,  we compare the distance between gradients induced by $1$-step and multi-step attacks. As presented in Table \ref{tab:grad_dist}, the average distance between gradients induced by $1$-step and multi-step $l_0$ attacks is $5$ orders of magnitude greater than those in the $l_1$, $l_2$ and $l_\infty$ cases, even when a single pixel is perturbed. This finding indicates that the loss landscape of $l_0$ adversarial training is significantly more craggy than other cases in the input space.

\subsection{Comparison with Other Smoothing Approaches} \label{sec:compare_baseline}
\begin{table}[t]
\vspace{-1em}
    \centering
    \caption{Comparison with other smoothing approaches in robust accuracy (\%) by sAA. The target sparsity level $\epsilon=20$. We compare PreAct ResNet-18 \citep{he2016deep} models trained on CIFAR-10 \citep{krizhevsky2009learning}. The \textit{italic numbers} indicate catastrophic overfitting (CO) happens.}
    \small
    \begin{tabular}{c|c c c c c c }
    \toprule[1.5pt]
    Method & LS & AdvLC & MART & TRADES+SWA & \textbf{Ours}& \textbf{Ours+ SWA}\\
    \midrule[1pt]
    Robust Acc. & \textit{0.0} & 59.6& 48.0 & 45.0& \textbf{63.0} & 59.1\\
    \bottomrule[1.5pt]
    \end{tabular}
    \label{tab:baseline}
    \vspace{-0.5em}
\end{table}
In this section, we undertake a more comprehensive comparison between our proposed Fast-LS-$l_0$ and other smoothing approaches (label smoothing (LS) \citep{szegedy2016rethinking}, AdvLC \citep{li2023understanding}, MART \citep{Wang2020Improving} and SWA \citep{chen2020robust}). Note that all baselines are tuned by a thorough hyperparameter search.
As demonstrated in Table \ref{tab:baseline}, our method achieves the strongest robustness against sAA. First, naive LS turns out ineffective under the $l_0$ setting. The performance of AdvLC, MART, and SWA is not as good as the method we use. Additionally, the combination of our method with SWA harms robustness due to over-smoothing.

\subsection{More Results of Section \ref{sec:compare_multi}} \label{sec:more_result}
\begin{table}[ht]
\setlength\tabcolsep{5pt}
\vspace{-1em}
    \centering
    \caption{Robust accuracy (\%) of various models on different attacks that generate $l_0$ bounded perturbations, where the sparsity level $\epsilon=10$. The models are PreAct ResNet-18 trained on \textbf{CIFAR-100} \citep{krizhevsky2009learning} with $\epsilon=60$. Note that the results of vanilla sAT and sTRADES are obtained from \citep{zhong2024efficient}, CornerSearch (CS) is evaluated on 1000 samples due to its high computational complexity.}
    \small
    \begin{tabular}{l|c|c|c c |c c c| c}
        \toprule[1.5pt]
          \multirow{2}{*}{Model}& \multirow{2}{*}{\makecell{Time\\Cost}} & \multirow{2}{*}{Clean} & \multicolumn{2}{c|}{Black-Box} & \multicolumn{3}{c|}{White-Box} & \multirow{2}{*}{sAA} \\
          & & & CS & RS  & SAIF & sPGD$_{\mathrm{proj}}$& sPGD$_{\mathrm{unproj}}$& \\
         \midrule[1pt]
         \multicolumn{9}{l}{\textit{Multi-step}}\\
         \midrule[0.5pt]
         sAT & 4h 27m &67.0 & 44.3 & 41.6& 60.9 &  56.8  & 58.0  &41.6\\
         ~~{\textbf{+S\&N}}& 4h 58m & 64.3& 53.0& 52.9& 61.2& 59.2& 59.6&52.8\\
         sTRADES &5h 10m & 70.9 & 52.8 & 50.3& 65.2 & 64.0 & 63.7 & 50.2\\
         ~~{\textbf{+S\&N}} & 5h 40m & 63.8&56.5 &55.6 & 61.2& 60.5& 59.0&\textbf{55.3}\\
         \midrule[1pt]
         \multicolumn{9}{l}{\textit{One-step}}\\
          \midrule[0.5pt]
          \textbf{Fast-LS-$l_0$ (T)}& 1h 05m & 65.3& 54.5& 54.3& 60.4& 55.6& 54.4& 52.2\\
          \textbf{Fast-LS-$l_0$ (F)}& 1h 26m &65.0 & 56.2& 54.6& 60.8& 54.9& 54.9& \textbf{52.3}\\

         \bottomrule[1.5pt]
    \end{tabular}
    \label{tab:cifar100}
\end{table}

\begin{table}[ht]
\setlength\tabcolsep{5pt}
\vspace{-0.5em}
    \centering
    \caption{Robust accuracy (\%) of various models on different attacks that generate $l_0$ bounded perturbations, where the sparsity level $\epsilon=12$. The models are PreAct ResNet-18 trained on \textbf{GTSRB} \citep{stallkamp2012man} with $\epsilon=72$. All methods are evaluated on 500 samples, and CornerSearch (CS) is not evaluated here due to its high computational complexity.}
    \small
    \begin{tabular}{l|c|c|c c |c c c| c}
        \toprule[1.5pt]
          \multirow{2}{*}{Model}& \multirow{2}{*}{\makecell{Time\\Cost}} & \multirow{2}{*}{Clean} & \multicolumn{2}{c|}{Black-Box} & \multicolumn{3}{c|}{White-Box} & \multirow{2}{*}{sAA} \\
          & & & CS & RS  & SAIF & sPGD$_{\mathrm{proj}}$& sPGD$_{\mathrm{unproj}}$& \\
         \midrule[1pt]
         \multicolumn{9}{l}{\textit{Multi-step}}\\
          \midrule[0.5pt]
         sAT & 1h 3m& 98.4& -& 43.2& 92.4& 96.0& 96.2& 43.2\\
         ~~{\textbf{+S\&N}} & 1h 2m & 98.4 & - & 77.8 & 97.4 &96.8 & 95.4 &  \textbf{77.6}\\
        sTRADES& 1h 6m & 97.8& -& 67.6&94.0 & 95.6& 95.0& 67.4\\
         ~~{\textbf{+S\&N}} & 1h 7m & 95.6 & - & 75.4 & 93.6& 92.6& 91.2&75.2\\
         \midrule[1pt]
         \multicolumn{9}{l}{\textit{One-step}}\\
          \midrule[0.5pt]
         \textbf{Fast-LS-$l_0$ (T)} & 7m& 97.8& -& 75.2& 89.2& 74.4& 74.4&63.2\\
         \textbf{Fast-LS-$l_0$ (F)} & 9m& 98.6& -& 80.4& 94.2& 75.0& 79.8& \textbf{67.8}\\
         \bottomrule[1.5pt]
    \end{tabular}
    \label{tab:gtsrb}
    \vspace{-1em}
\end{table}

The results on CIFAR-100 and GTSRB datasets are presented in Table \ref{tab:cifar100} and \ref{tab:gtsrb}, respectively. The findings are consistent with those observed in Table \ref{tab:cifar10}, further validating the effectiveness of the proposed methods across different datasets. In contrast to the settings in \citep{zhong2024efficient}, we resize the images in GTSRB to $32\times 32$ instead of $224\times 224$ and retrain the models from scratch. The model are trained with $\epsilon=72$ and evaluated for robustness with $\epsilon=12$. It is important to note that due to the smaller search space resulting from low-resolution images, the attack success rate of the black-box Sparse-RS (RS) under this setting is significantly higher than that reported in \citep{zhong2024efficient}. 

\subsection{Standard Deviation of Robust Accuracy against Sparse-AutoAttack of Table \ref{tab:cifar10}} \label{sec:std}
\begin{table}[t]
    \vspace{-1em}
    \centering
    \caption{Average robust accuracy against sAA \citep{zhong2024efficient} obtained from three runs, where the sparsity level $\epsilon=20$. The variances are shown in brackets. The configurations are the same as in Table \ref{tab:cifar10}. Note that we do not include the results of vanilla sAT and sTRADES since their results are obtained from \citep{zhong2024efficient}.
    }
    \small
    \begin{tabular}{c|c |c |c|c}
    \toprule[1.5pt]
    Model & sAT + S\&N &sTRADES + S\&N   &Fast-LS-$l_0$ (T) & Fast-LS-$l_0$ (F)\\
    \midrule[1pt]
    Acc. &61.2  \footnotesize{($\pm$ 0.2)}& 65.5 \footnotesize{($\pm$ 0.7})& 63.0 \footnotesize{($\pm$ 0.7)}& 62.1 \footnotesize{($\pm$ 0.6)}\\
    \bottomrule[1.5pt]
    \end{tabular}
    \vspace{-1em}
    \label{tab:std}
\end{table}
To better validate the effectiveness of our method, we report the standard deviations of robust accuracy against sAA in Table \ref{tab:std}. We calculate these standard deviations by running the experiments three times with different random seeds. The configurations are the same as in Table \ref{tab:cifar10}. It can be observed that the fluctuation introduced by different random seeds does not outweigh the performance gain from the evaluated approaches.

\subsection{Robustness under Feature-space Attacks} \label{sec:ap_feature}
\begin{table}[H]
    \vspace{-0.5em}
    \centering
    \caption{Robust accuracy (\%) of cominations of our method under feature-space attacks. The model is the ResNet-18 trained with Fast-LS-$l_0$ on CIFAR-10.
    }
    \small
    \begin{tabular}{c|c c c c c c}
    \toprule[1.5pt]
    $\epsilon=60$ & Clean Acc. & $\sigma$-zero & sPGD$_p$ &sPGD$_u$ & RS& sAA\\
    \midrule[1pt]
    Fast-LS-$l_0$ & 84.9 & 64.2 & 76.6 & 73.5 & 58.5 & \textbf{58.3}\\
    \bottomrule[1.5pt]
    \end{tabular}
    \vspace{-1em}
    \label{tab:feature}
\end{table}
We train a ResNet with our method in feature space with $\epsilon_{train}=360$ and evaluate its robustness under different attacks with $\epsilon=60$. To accommodate the feature-space sparsity, we modify the shape of perturbation mask in sPGD from $(b,1,h,w)$ to $(b,c,h,w)$. Since RS is unable to generate feature-space perturbations, we set its pixel-space adversarial budget to $\epsilon/c$ to ensure the equivalent feature-space budget as other attacks.
As presented in Table \ref{tab:feature}, our method can also achieve robustness in this setting. In addition, we notice that $\sigma$-zero outperforms sPGD in feature-space attacks. Nevertheless, sAA is still the most comprehensive and reliable robustness evaluation method.

\subsection{Effectiveness in Improving Fast $l_1$, $l_2$ and $l_\infty$ Adversarial Training} \label{sec:ap_lp}
\begin{table}[h]
\vspace{-0.5em}
    \centering
    \caption{Robust accuracy (\%) of cominations of our method with fast $l_1$, $l_2$, and $l_\infty$ adversarial training methods. The results are obtained through the AutoAttack for corresponding norms.}
    \small
    \begin{tabular}{c |c | c| c c}
    \toprule[1.5pt]
     ~ & $l_1~(\epsilon=12)$  & ~ &$l_2~(\epsilon=0.5)$ & $l_1~(\epsilon=8/255)$\\
    \midrule[1pt]
    \textbf{Fast-EG-$l_1$} & 45.4 & \textbf{GradAlign} & 60.2 & 39.4 \\
    \textbf{+ Ours} & \textbf{49.2} & \textbf{+ Ours} & \textbf{63.1} & \textbf{42.4} \\    
    \bottomrule[1.5pt]
    \end{tabular}
    \label{tab:lp}
    \vspace{-0.5em}
\end{table}
We combine our method with fast $l_1$, $l_2$, and $l_\infty$ adversarial training methods. Specifically, we use Fast-EG-$l_1$ \cite{jiang2023towards} for $l_1$ cases and GradAlign \cite{Andriushchenko2020UnderstandingAI} for $l_2$ and $l_\infty$ cases. The results in Table \ref{tab:lp} show that our method enhances robustness in all cases. Overall, these findings demonstrate the effectiveness of our method in enhancing robustness against different attacks.

\subsection{Evaluation on Different Networks} \label{sec:ap_network}
\begin{table}[h]
\vspace{-0.5em}
    \centering
    \caption{Robust accuracy (\%) of various networks against sAA on CIFASR-10, where the sparsity level $\epsilon=20$. The networks are adversarially trained with different methods, including $1$-step sAT, $1$-step sTRADES and the proposed Fast-LS-$l_0$.}
    \small
    \begin{tabular}{c|c c c}
    \toprule[1.5pt]
     & PRN-18 & ConvNeXt-T & Swin-T\\
    \midrule[1pt]
    $1$-step sAT & $0.0$ & $0.8$ & $0.1$ \\
    $1$-step sTRADES & 31.0 & 71.0 & 43.2 \\
    \textbf{Fast-LS-$l_0$} & \textbf{63.0} & \textbf{78.6} & \textbf{58.9} \\    
    \bottomrule[1.5pt]
    \end{tabular}
    \label{tab:network}
    \vspace{-0.5em}
\end{table}
Despite the effectiveness of our method on PreActResNet-18 (PRN-18) and ResNet-34, the performance of our Fast-LS-$l_0$ and its ablations on different networks remains unexplored. In this regard, we further evaluate our method on two popular architectures, i.e., ConvNeXt \citep{liu2022convnet} and Swin Transformer \citep{liu2021swin}. Note that we adopt their tiny versions for CIFAR-10, which have a similar number of parameters as ResNet-18, and we follow the training settings of their CIFAR-10 implementations. The other experimental settings are the same as those described in Section \ref{sec:investigate}. As shown in Table \ref{tab:network}, vanilla adversarial training results in CO on all networks, and our method produces the best robust accuracy against sAA, demonstrating the effectiveness of our method on different networks. Notably, ConvNeXt shows surprisingly strong robustness against sAA, suggesting that advanced architecture design and dedicated hyperparameter tuning can provide additional performance gains. However, as Transformers has struggled to perform well on small datasets without pretraining \citep{debenedetti2023light}, Swin Transformer also underperforms CNN-based networks in this scenario.

\subsection{\texorpdfstring{Loss Landscape of one-step sAT with Different $\epsilon$}{Loss Landscape of one-step sAT with Different epsilon}} \label{sec:ap_loss_landscape}
\begin{figure}[h]
\vspace{-1em}
    \centering
    \small
    \subfigure[$\mathcal{L}_{\epsilon}^{(0)},~\epsilon=20$]{\includegraphics[width=0.325\textwidth]{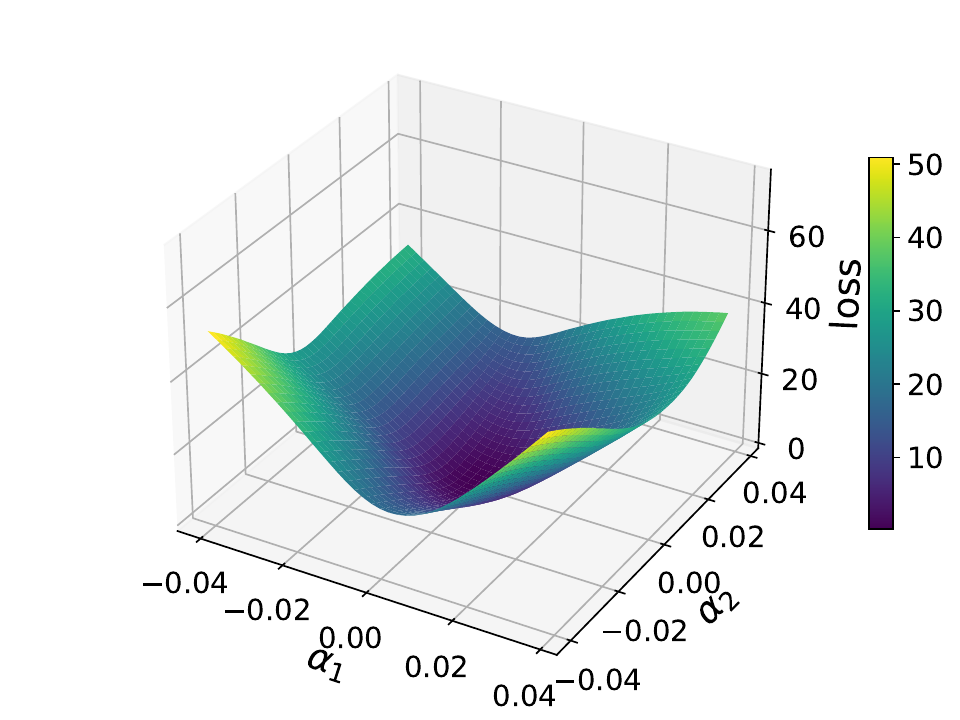}}
    \subfigure[$\mathcal{L}_{\epsilon}^{(0)},~\epsilon=40$]{\includegraphics[width=0.325\textwidth]{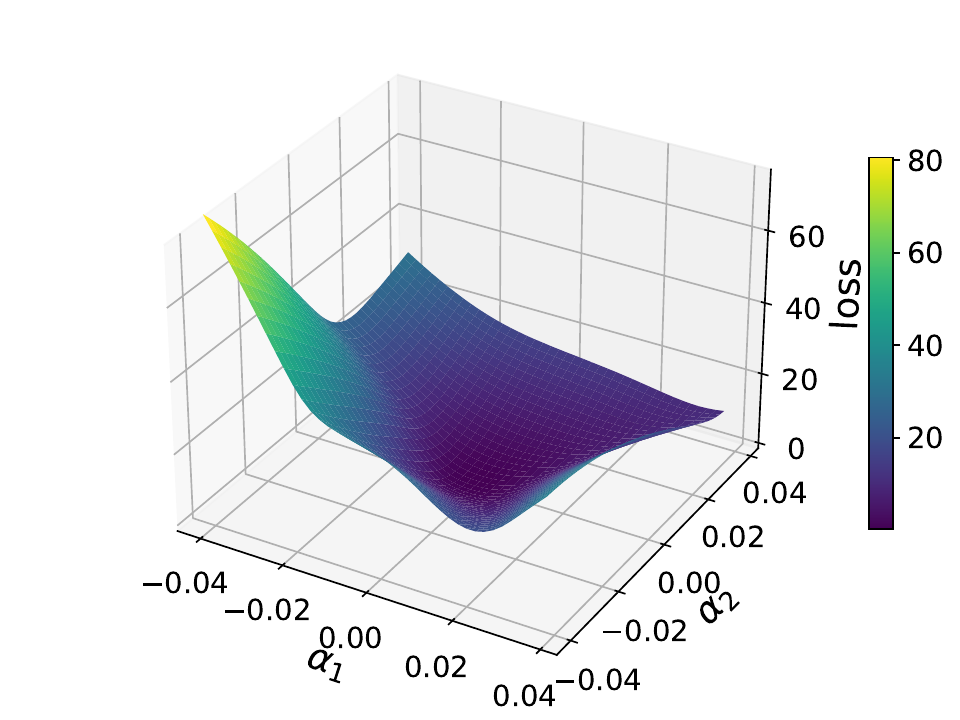}}
    \subfigure[$\mathcal{L}_{\epsilon}^{(0)},~\epsilon=120$]{\includegraphics[width=0.325\textwidth]{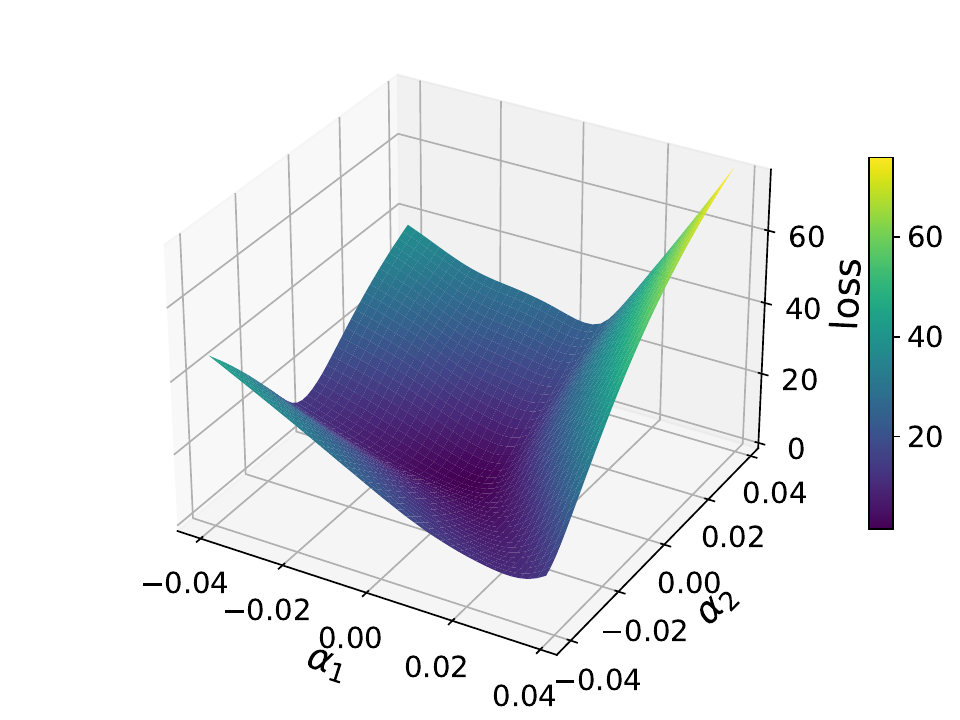}}
    \vspace{-0.5em}
    \caption{Loss landscape of $1$-step sAT \citep{zhong2024efficient} with different $\epsilon$ values on the training set of CIFAR-10 \citep{krizhevsky2009learning}. The architecture of the model is PreactResNet-18. \textbf{(a)} Landscape of $\mathcal{L}_{\epsilon}^{(0)}(\vx, \vtheta+\alpha_1\vv_1+\alpha_2\vv_2)$ with $\epsilon=20$, where $\vv_1$ and $\vv_2$ are the eigenvectors corresponding to the top 2 eigenvalues of the Hessian matrices, respectively. \textbf{(b)} Landscape of $\mathcal{L}_{\epsilon}^{(0)}$ with $\epsilon=40$. \textbf{(c)}  Landscape of $\mathcal{L}_{\epsilon}^{(0)}$ with $\epsilon=120$.}
    \label{fig:landscape_epsilon}
\end{figure}
As supplementary of Figure \ref{fig:nonsmooth_loss}, we visualize the loss landscapes of $1$-step sAT \citep{zhong2024efficient} with different $\epsilon$, including $20$, $40$ and $120$, in Figure \ref{fig:landscape_epsilon}. It can be observed that the $l_0$ adversarial loss exhibits a drastic increase in response to relatively minor alterations in the $\vtheta$-space. Moreover, the degree of non-smoothness increases in proportion to $\epsilon$, which is consistent with the observation in Figure \ref{fig:nonsmooth_loss} (a).

\subsection{Smoother Loss Landscape Induced by Soft Label and Trade-off Loss Function} \label{sec:ap_loss_landscape_soft}
\begin{figure}[h]
\vspace{-1em}
    \centering
    \small
    \subfigure[Eigenvalues of $\nabla_{\vtheta}^2\mathcal{L}_\epsilon^{(0)}$]{\includegraphics[width=0.325\textwidth]{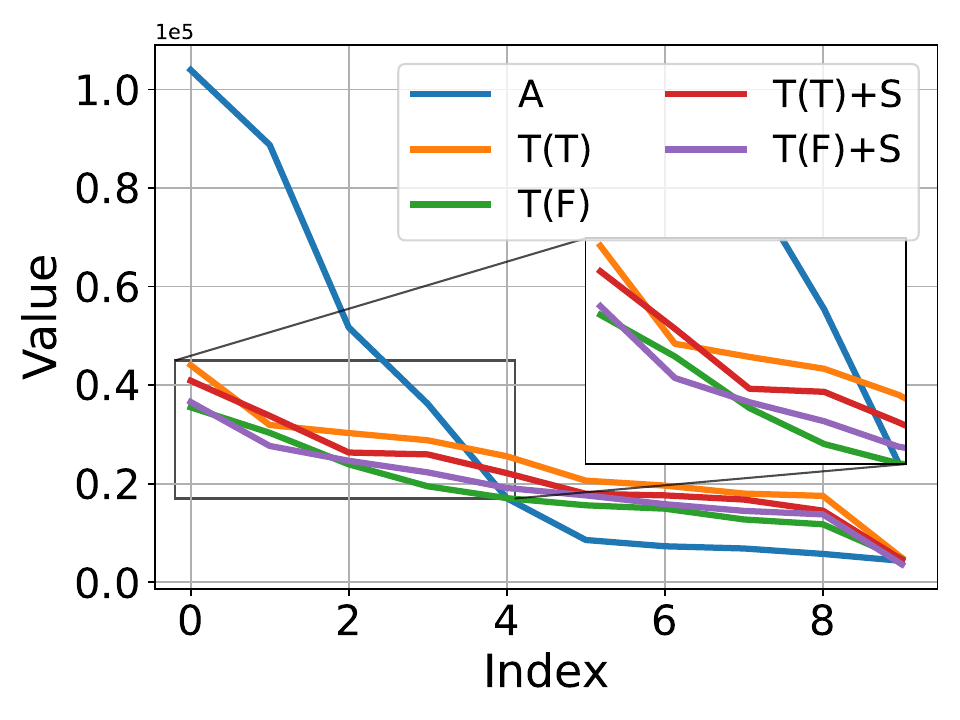}}
    \subfigure[$1$-step sAT]{\includegraphics[width=0.325\textwidth]{images/hessian_l0_k120.pdf}}
    \subfigure[$1$-step sTRADES (T)]{\includegraphics[width=0.325\textwidth]{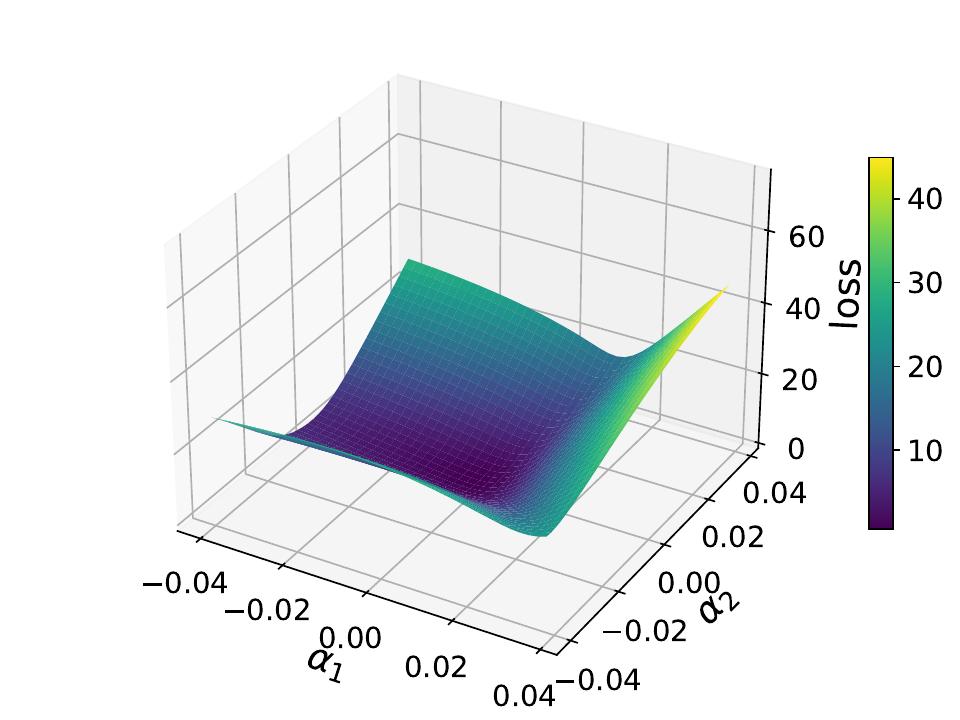}}\\
    \subfigure[$1$-step sTRADES (F)]{\includegraphics[width=0.325\textwidth]{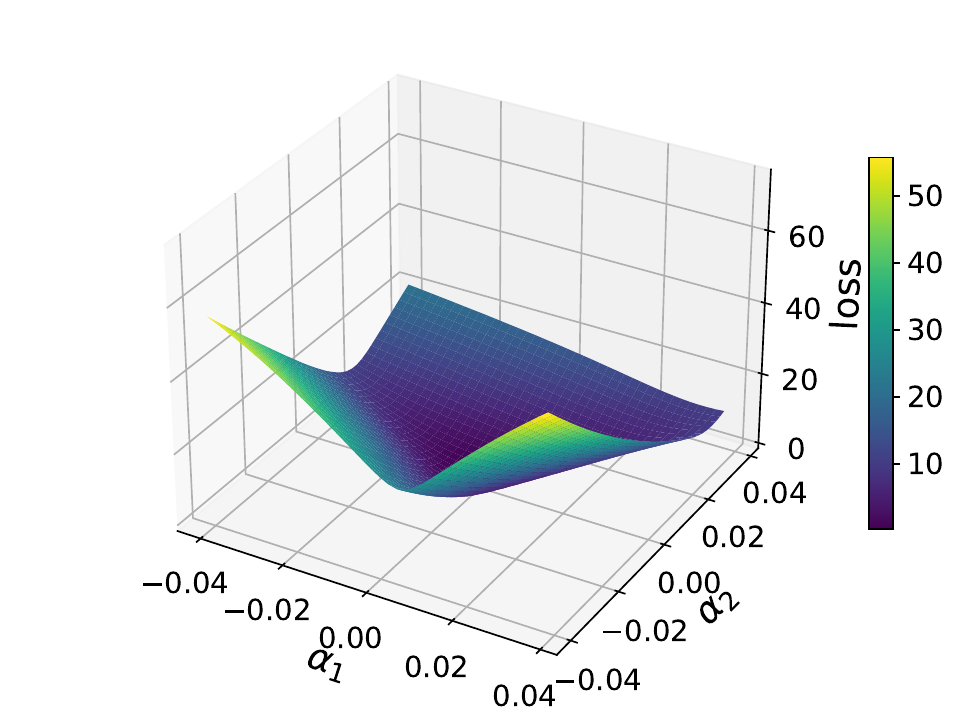}}
    \subfigure[$1$-step sTRADES (T) + SAT]{\includegraphics[width=0.325\textwidth]{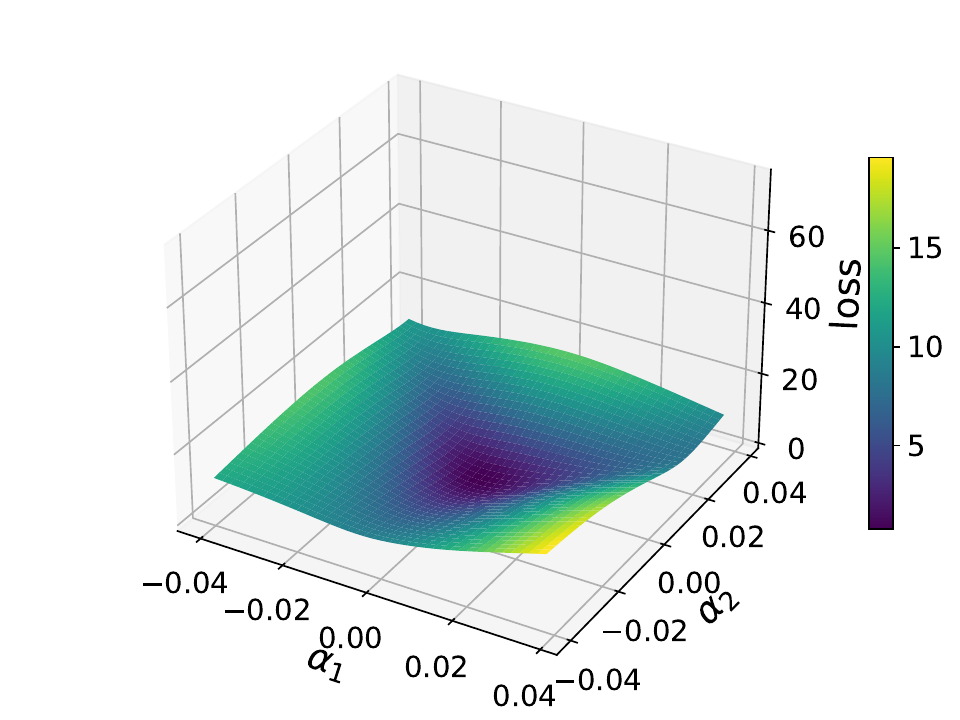}}
    \subfigure[$1$-step sTRADES (F) + SAT]{\includegraphics[width=0.325\textwidth]{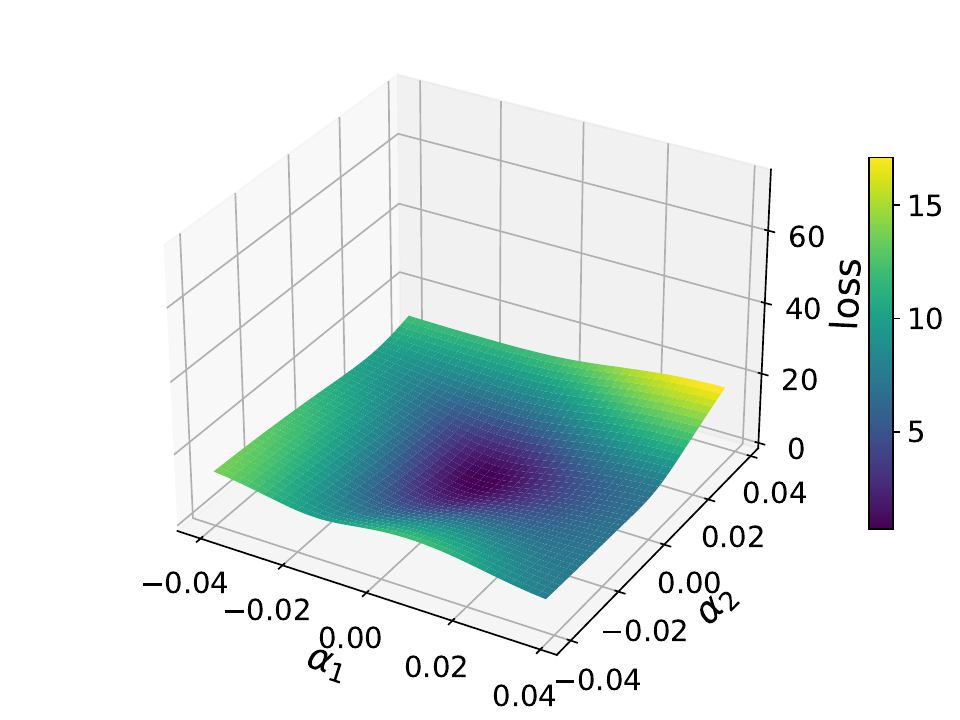}}
    \vspace{-0.5em}
    \caption{Smoothness visualization of different methods with $\epsilon=120$ on the training set of CIFAR-10 \citep{krizhevsky2009learning}. The architecture of the model is PreactResNet-18. \textbf{(a)} Top-$10$ eigenvalues of $\nabla_{\vtheta}^2\mathcal{L}_{\epsilon}^{(0)}(\vx, \vtheta)$ of different methods. A and T denote $1$-step sAT and $1$-step sTRADES, respectively. T and F in the brackets are two respective versions of sTRADES indicated in Sec. \ref{sec:investigate}. \textbf{(b)} Loss landscape of $1$-step sAT. \textbf{(c)}  Loss landscape of $1$-step sTRADES (T). \textbf{(d)} Loss landscape of $1$-step sTRADES (F). \textbf{(e)} Loss landscape of $1$-step sTRADES (T) + SAT. \textbf{(f)} Loss landscape of $1$-step sTRADES (F) + SAT.}
    \label{fig:landscape_method}
    \vspace{-1em}
\end{figure}

The effectiveness of soft label and trade-off loss function in improving the performance of $l_0$ adversarial training is demonstrated in Section \ref{sec:investigate} and \ref{sec:compare_multi}. Additionally, we visualize the curves of top-10 eigenvalues of Hessian matrices of the different methods discussed in Section \ref{sec:investigate} and their respective loss landscapes in Figure \ref{fig:landscape_method}. Note that since N-FGSM results in a larger upper bound of $\|\vdelta_1-\vdelta_2\|$, it is not considered here to make a fair comparison. Figure \ref{fig:landscape_method} (a) shows that sTRADES induces considerably smaller eigenvalues of Hessian matrices compared to sAT, while the difference between sTRADES (T) and sTRADES (F) is negligible. SAT, on the other hand, has only a marginal effect on the eigenvalues. However, as illustrated in Figure \ref{fig:landscape_method} (b)-(f), SAT plays a crucial role in smoothing the loss landscape, which relates to the change rate of loss, i.e., the first-order smoothness. These observations align with the theoretical derivation presented in Section \ref{sec:soft_label}, indicating that soft label improves the first-order smoothness, while trade-off loss function contributes to the second-order smoothness.

\subsection{Ablation Studies}\label{sec:ab}

In this section, we conduct more ablation studies on the results in Section~\ref{sec:investigate}.
Specifically, we focus on the best configuration in Table~\ref{tab:inv1}: Fast-LS-$l_0$ (T) (i.e., $1$-step sTRADES (T) + SAT \& N-FGSM).
Unless specified, we adopt the same training settings as in Table~\ref{tab:inv1}.

Table~\ref{tab:ab_sat_epoch} presents a performance comparison of the model when SAT is enable in different training phases.
We can see that the performance achieves the best when enabling SAT at the $50$-th epoch.
This observation demonstrates that the best performance in $1$-step sTRADES is achieved when SAT is enabled at the intermediate epoch where the learning rate is relatively low.


In Table \ref{tab:ab_sat_momentum}, we compare the performance when using different momentum factor in SAT. We can see that the default setting in \cite{huang2020self}, i.e., $0.9$, provides the best performance.

In Table \ref{tab:ab_trades_beta}, we compare the performance when using different balance factor $\beta$ in TRADES. It can be observed that $\beta=3$ and $6$ induce similar results, indicating the default setting in \citep{Zhang2019TheoreticallyPT}, i.e., $6$, is the optimal.

In Table \ref{tab:ab_eps_train}, we compare the performance when using different $\epsilon_{train}$. It can be observed that when $\epsilon_{train}=120$, our method achieves the best performance, which is consistent with the observations in \cite{zhong2024efficient}. Notably, with the assistance of our method, CO does not appear in all settings. 

\begin{minipage}[h]{\textwidth}
    \begin{minipage}[htb]{0.47\textwidth}
    \centering
    \makeatletter\def\@captype{table}\makeatother \caption{Ablation study on the epoch of enabling SAT. The evaluated attack is sAA, where the sparsity level $\epsilon=20$.
    }
    \small
    \begin{tabular}{c|c c c}
    \toprule[1.5pt]
    SAT epoch &  30 & 50 & 70\\
    \midrule[1pt]
    Robust Accuracy & 60.2& \textbf{63.0} & 62.8\\
    \bottomrule[1.5pt]
    \end{tabular}
    \label{tab:ab_sat_epoch}
    \end{minipage}
    \quad
    \begin{minipage}[htb]{0.475\textwidth}
    \centering
    \makeatletter\def\@captype{table}\makeatother \caption{Ablation study on the momentum factor of SAT. The evaluated attack is sAA, where the sparsity level $\epsilon=20$.}
    \small
    \begin{tabular}{c|c c c}
    \toprule[1.5pt]
    SAT momentum &  0.5 & 0.7 & 0.9\\
    \midrule[1pt]
    Robust Accuracy & 55.4& 60.4 & \textbf{63.0}\\
    \bottomrule[1.5pt]
    \end{tabular}
    \label{tab:ab_sat_momentum}
    \end{minipage}
    \quad
\end{minipage}

\begin{minipage}[htb]{\textwidth}
     \begin{minipage}[htb]{0.475\textwidth}
    \centering
    \makeatletter\def\@captype{table}\makeatother \caption{Ablation study on the balance factor $\beta$ in TRADES loss function. The evaluated attack is sAA, where the sparsity level $\epsilon=20$.}
    \small
    \begin{tabular}{c|c c c}
    \toprule[1.5pt]
    TRADES $\beta$ &  1 & 3 & 6\\
    \midrule[1pt]
    Robust Accuracy & 58.7& \textbf{63.0} & \textbf{63.0}\\
    \bottomrule[1.5pt]
    \end{tabular}
    \label{tab:ab_trades_beta}
    \end{minipage}
    \quad
    \begin{minipage}[htb]{0.475\textwidth}
    \centering
    \makeatletter\def\@captype{table}\makeatother \caption{Ablation study on $\epsilon_{train}$. The evaluated attack is sAA, where the sparsity level $\epsilon=20$.}
    \small
    \begin{tabular}{c|c c c}
    \toprule[1.5pt]
    $\epsilon_{train}$ &  20 & 40 & 120\\
    \midrule[1pt]
    Robust Accuracy & 61.4 & 62.1 & \textbf{63.0}\\
    \bottomrule[1.5pt]
    \end{tabular}
    \label{tab:ab_eps_train}
    \end{minipage}
\end{minipage}

\section{Implementation Details} \label{sec:imple}
Generally, the epoch of enabling SAT is $1/2$ of the total epochs. For N-FGSM, the random noise for augmentation is the random sparse perturbation with sparsity level ranging from $0$ to $2\epsilon$, where $\epsilon$ is the sparsity level of adversarial perturbations. The interpolation factor $\alpha$ in trade-off loss function is set to $0.75$. The balance factor $\beta$ in TRADES loss function is set to $6$. The optimizer is SGD with a momentum factor of $0.9$ and a weight decay factor of $5\times 10^{-4}$. The learning rate is initialized to $0.05$ and is divided by a factor of $10$ at the $1/4$ and $3/4$ of the total epochs. The specific settings for different datasets are listed as follows:
\begin{itemize}[leftmargin=1.2em]
    \item \textbf{CIFAR-10, CIFAR-100 \citep{krizhevsky2009learning} and GTSRB \citep{stallkamp2012man}:} The adopted network is PreAct ResNet-18 \citep{he2016identity} with softplus activation \citep{dugas2000incorporating}. The training batch size is $128$. We train the model for $100$ epochs. 
    \item \textbf{ImageNet-100 \citep{imagenet_cvpr09}:} The adopted network is ResNet-34 \citep{he2016deep}. The training batch size is $48$. We train the model for $50$ epochs.
\end{itemize}

Unless specified, the hyperparameters of attacks and other configurations are the same as in \citep{zhong2024efficient}.

\end{document}